\documentclass{article}

\usepackage{microtype}
\usepackage{multirow}
\usepackage{graphicx}
\usepackage{subcaption}
\usepackage{booktabs} %

\usepackage{hyperref}
\usepackage{enumitem}
\usepackage{xfrac}

\usepackage[ruled,vlined]{algorithm2e}

\usepackage[accepted]{icml2026}

\usepackage{amsmath}
\usepackage{amssymb}
\usepackage{mathtools}
\usepackage{amsthm}
\usepackage{xcolor}
\usepackage{wasysym}
\usepackage{diagbox}
\usepackage{afterpage}
\everypar{\looseness=-1}
\allowdisplaybreaks
\newcommand*{\defeq}{\stackrel{\text{def}}{=}}

\usepackage[capitalize,noabbrev]{cleveref}

\theoremstyle{plain}
\newtheorem{theorem}{Theorem}[section]
\newtheorem{proposition}[theorem]{Proposition}
\newtheorem{lemma}[theorem]{Lemma}

\theoremstyle{definition}
\newtheorem{definition}[theorem]{Definition}

\theoremstyle{remark}

\newcommand{\cF}{\mathcal{F}}

\newcommand{\E}{\mathbb{E}}
\newcommand{\R}{\mathbb{R}}
\DeclareMathOperator*{\argmax}{arg\,max}

\newcommand{\Ex}[1]{{\underset{#1}{\mathbb{E}}}}

\definecolor{mysunset}{RGB}{255, 143, 1}
\newcommand{\sunset}[1]{{\color{mysunset} #1}}

\usepackage[textsize=tiny]{todonotes}

\icmltitlerunning{Variational Entropic Optimal Transport}

\begin{document}

\twocolumn[
  \icmltitle{Variational Entropic Optimal Transport}

  \icmlsetsymbol{equal}{*}

  \begin{icmlauthorlist}
    \icmlauthor{Roman Dyachenko}{hse}
    \icmlauthor{Nikita Gushchin}{aai,axxx}
    \icmlauthor{Kirill Sokolov}{msu}
    \icmlauthor{Petr Mokrov}{aai}
    \icmlauthor{Evgeny Burnaev}{aai,axxx}
    \icmlauthor{Alexander Korotin}{aai,axxx}
  \end{icmlauthorlist}

  \icmlaffiliation{hse}{Higher School of Economics, Moscow, Russia}
  \icmlaffiliation{aai}{Applied AI Institute, Moscow, Russia}
  \icmlaffiliation{axxx}{AXXX, Russia}
  \icmlaffiliation{msu}{Lomonosov Moscow State University, Moscow, Russia}

  \icmlcorrespondingauthor{Roman Dyachenko}{rrdiachenko@edu.hse.ru}
  \icmlcorrespondingauthor{Nikita Gushchin}{i.nikita.gushchin@gmail.com}

  \icmlkeywords{Machine Learning, ICML}

  \vskip 0.3in
]

\printAffiliationsAndNotice{}  %

\begin{abstract}
Entropic optimal transport (EOT) in continuous spaces with quadratic cost is a classical tool for solving the domain translation problem. In practice, recent approaches optimize a weak dual EOT objective depending on a single potential, but doing so is computationally not efficient due to the intractable log-partition term. Existing methods typically resolve this obstacle in one of two ways: by significantly restricting the transport family to obtain closed-form normalization (via Gaussian-mixture parameterizations), or by using general neural parameterizations that require simulation-based training procedures. We propose Variational Entropic Optimal Transport (VarEOT), based on an exact variational reformulation of the log-partition $\log \E[\exp(\cdot)]$ as a tractable minimization over an auxiliary log-normalizer. This yields a differentiable learning objective optimized with stochastic gradients and avoids the necessity of MCMC simulations during the training. We provide theoretical guarantees, including finite-sample generalization bounds and approximation results under universal function approximation. Experiments on synthetic data and unpaired image-to-image translation demonstrate competitive or improved translation quality, while comparisons within the solvers that use the same weak dual EOT objective support the benefit of the proposed optimization principle. The code for our solver can be found at \url{https://github.com/DrEternity/VarEOT}. 
\end{abstract}

\vspace{-3mm}
\section{Introduction}
\vspace{-1mm}
Entropic Optimal Transport (EOT) with quadratic cost is well-established mathematical framework with strong theoretical properties that has found wide application in
generative modeling and especially for unpaired domain translation. Despite this, the practical adoption of entropic transport methods has been limited by the lack of efficient and flexible algorithms. Existing approaches typically suffer from at least one of the following drawbacks: they are not amenable to simulation-free training, requiring costly sampling at each iteration \citep{mokrov2024energyguided}; they require adversarial optimization \citep{gushchin2024adversarial}; they involve training a sequence of models rather than a single objective \citep{shi2023diffusion}; they impose restrictive parametric forms on the transport plan \citep{korotin2024light}; or they turn out to be too sensitive to the entropic regularization strength \citep{daniels2021score}.

In our paper, we make a decisive step towards solving the drawbacks of existing EOT methods, and propose a novel simulation-free training solver based on an innovative variational reformulation of the weak dual EOT objective. We present the following \textbf{main contributions}:
\vspace{-2mm}

\begin{enumerate}%
    \item \textbf{Variational dual objective} (\wasyparagraph\ref{sec:variational-dual}).  We derive an equivalent reformulation of the weak dual objective of entropic OT with quadratic cost in which the intractable log-partition term is replaced by an exact variational minimization over an auxiliary log-normalizer.
    \item \textbf{Simulation-free training solver} (\wasyparagraph\ref{sec:practical-algorithm}). Building on this reformulation, we propose a \emph{simulation-free training} variational solver that jointly learns the dual potential and the auxiliary normalizer via neural parameterizations (no MCMC).
    \item \textbf{Learning guarantees} (\wasyparagraph\ref{sec:learning-guarantees}). We provide finite-sample learning guarantees for recovery of the entropic OT plan, decomposing error into estimation and approximation terms, and show vanishing approximation error under universal function approximation.
    \item \textbf{Evaluation} (\wasyparagraph\ref{sec-experiments}). We evaluate our solver on synthetic data and unpaired image-to-image translation. For the latter, we highlight gains by comparing against other solvers that optimize the same weak dual objective.
\end{enumerate}

\providecommand{\bbR}{\mathbb{R}}
\providecommand{\cX}{\mathcal{X}}
\providecommand{\cY}{\mathcal{Y}}
\providecommand{\cZ}{\mathcal{Z}}
\providecommand{\cP}{\mathcal{P}}
\providecommand{\cC}{\mathcal{C}}
\providecommand{\cPac}{\mathcal{P}_{\mathrm{ac}}}
\providecommand{\bbP}{\mathbb{P}}
\providecommand{\bbQ}{\mathbb{Q}}
\providecommand{\dv}[2]{\frac{d #1}{d #2}}

\vspace{3mm}
\section{Background}
\vspace{-1mm}
This section provides the necessary background on entropic optimal transport and its optimization using weak dual reformulation. In \wasyparagraph\ref{sec:eot-background}, we recall the entropic optimal transport problem
with quadratic cost, introduce its weak dual formulation, and describe the structure of the optimal transport plan induced by the optimal dual potential. In \wasyparagraph\ref{sec:practical-setup}, we clarify our learning setup. In \wasyparagraph\ref{sec:existing-solvers}, we review existing solvers based on weak dual objective, highlighting two main paradigms: simulation-based optimization via implicit sampling proposed by the authors of EgNOT \citep{mokrov2024energyguided} and simulation-free methods based on restrictive parametric assumptions proposed by the authors of LightSB \citep{korotin2024light}. This discussion motivates the need for a dual formulation amenable to simulation-free training while remaining sufficiently expressive, which we address in subsequent \wasyparagraph\ref{sec:method} by proposing our novel VarEOT solver. %

\vspace{-2mm}
\subsection{Entropic Optimal Transport with the quadratic cost}
\label{sec:eot-background}
\vspace{-1mm}

Let $p_0, p_1 \in \mathcal{P}_{\mathrm{ac}}(\mathbb{R}^{D})$ be the set of absolutely continuous Borel probability measures on $\mathbb{R}^{D}$. Let $\Pi(p_0,p_1)$ denote the set of couplings (transport plans) on $\mathbb{R}^D\times\mathbb{R}^D$ with marginals $p_0$ and $p_1$. We write $\pi(x_0,x_1)$ for a plan density; $H$ is the differential entropy.
The Entropic Optimal Transport (EOT) problem is given by:
\begin{align}%
\mathrm{EOT}_{\varepsilon}(p_0,p_1)
&\overset{\mathrm{def}}{=}
\min_{\pi\in\Pi(p_0,p_1)}
\Bigg\{
\underbrace{\underset{\pi(x_0, x_1)}{\mathbb{E}}\!\left[\frac{\|x_0-x_1\|^2}{2}\right]}_{\text{optimal transport term}} \nonumber \\
&\underbrace{- \varepsilon\int_{\mathbb{R}^D} \!\!H \!\big(\pi(\cdot\mid x_0)\big)p_0(x_0)dx_0}_{\text{entropic regularization term}}
\Bigg\}. \label{eq:eot}
\end{align}
The entropic regularization term in \eqref{eq:eot} is due to \citep{mokrov2024energyguided}. There are equivalent forms of the term \cite{cuturi2013sinkhorn, leonard2014survey} which differ only by additive constants that do not affect the solution. 

The EOT problem admits a unique minimizer $\pi^\ast$, referred to as the EOT plan. While the primal formulation~\eqref{eq:eot} is conceptually appealing, it is computationally inconvenient, since enforcing the marginal constraints $\pi \in \Pi(p_0,p_1)$ requires optimizing over a complex set of probability measures that does not admit a straightforward parametrization. 

\textbf{Weak dual form of EOT.} Objective \eqref{eq:eot} admits the following weak dual representation \citep[Theorem~1]{mokrov2024energyguided}:
\vspace{-1mm}
\begin{equation} \sup_{f\in L_1(\R^D)}
\bigg\{\underbrace{
\Ex{p_1(x_1)}f(x_1)
- 
\varepsilon
\Ex{p_0(x_0)}\log Z(f,x_0)}_{\defeq \mathcal{L}(f)}
\biggr\}, \!\!
\label{eq:dual-final}
\end{equation}
where the $\sup$ is taken over a function $f\in L_1(p_1)$ satisfying $f\equiv-\infty$ outside the $\mathrm{supp}(p_1)$ and
\vspace{-1mm}
\begin{equation}
Z(f,x_0)
\defeq
\int_{\mathbb{R}^{D}}
\exp\!\left(
\frac{f(x_1) - \tfrac{1}{2}\|x_0-x_1\|^{2}}{\varepsilon}
\right) dx_1
\label{eq:partition}
\end{equation}
is the partition function.

\textbf{Optimal transport plan.}
Let $f^*$ be an optimizer of \eqref{eq:dual-final}, the corresponding optimal transport plan $\pi^*$ \citep[Theorem 1]{mokrov2024energyguided} can be recovered from it. By the disintegration with respect to the source marginal $p_0$ we have:
\begin{equation}
\pi^*(x_0,x_1)  
=
\pi^*(x_1 \mid x_0)\, p_0(x_0),
\label{eq:joint_pi_star}
\end{equation}
Then the density of conditional distribution $\pi^*(\cdot \mid x_0)$ in \eqref{eq:joint_pi_star} is directly defined by $f^*$:
\vspace{-2mm}
\begin{equation}
\hspace*{-2mm}\pi^*(x_1 \!\!\mid\! x_0) \!=\! \frac{1}{Z(f^*,x_0)}
\exp\!\left(\!
\frac{f^*(x_1) \!-\! \tfrac{1}{2}\|x_0\!-\!x_1\|^{2}}{\varepsilon}
\right).
\label{eq:cond_pi_star}
\end{equation}

\vspace{-4mm}
\subsection{Computational EOT setup}\label{sec:practical-setup}
\vspace{-1mm}

In practice, the source and target distributions, $p_0, p_1$, as well as the EOT objective \eqref{eq:eot}, could be expressed and treated in different ways. To avoid possible misunderstanding, below we formalize our \textbf{practical learning setup}:

\hspace{1mm}\fbox{%
    \parbox{0.93\linewidth}{
    We assume that source and target distributions $p_0$ and $p_1$ are accessible only by a limited number of i.i.d. empirical samples (datasets) ${\{x_0^1, x_0^2, \dots x_0^{N}\} \sim p_0}$; ${\{x_1^1, x_1^2, \dots x_1^{M}\} \sim p_1}$. Our aim is to approximate the optimal conditional plan $\pi^*(\cdot \vert x_0)$ (eq. \eqref{eq:cond_pi_star}) between entire \textit{distributions} $p_0$ and $p_1$. 
    The recovered solution should provide the \textit{out-of-sample} estimation, i.e., allow generating samples from $\pi^*(\cdot \vert x_0^{\text{new}})$, where $x_0^{\text{new}}$ is a new sample from $p_0$ which is not necessarily present in the train dataset.
    }
}%

This setup falls within \textbf{continuous} OT, in contrast to discrete OT \citep{cuturi2013sinkhorn,peyre2019computational}, which is designed to compute one-to-one or one-to-many \textit{correspondence} directly between the collections of provided source and target samples. As a result, discrete OT approaches do not naturally accommodate the out-of-sample estimation demanded by continuous OT. In our manuscript, we focus exclusively on continuous OT approaches, treating discrete OT as a considerably different direction.

\vspace{-2mm}
\subsection{Existing Weak Dual Formulation Solvers}
\label{sec:existing-solvers}
\vspace{-1mm}

The practical optimization of problem \eqref{eq:dual-final} remains challenging due to the presence of the partition function $Z(f,x_0)$, which is, in general, intractable to compute exactly. Below we review two representative strategies to optimize the semi-dual objective: (i) general neural potentials with implicit sampling (EgNOT), and (ii) simulation-free objectives enabled by restrictive parametric transport families (LightSB).

\textbf{EgNOT solver.} The authors of EgNOT~\citep{mokrov2024energyguided} solve \eqref{eq:dual-final} by parametrizing $f_{\theta}$ by a neural network and deriving the gradient of the weak dual objective $\mathcal{L}(f_{\theta})$:
\vspace{-1mm}
\begin{equation}
\begin{aligned}
\nabla_{\theta}\,\mathcal{L}( f_\theta)
=
- \Ex{p_0(x_0)}\left[\Ex{\pi_{\theta}(x_1|x_0)}\left[ \nabla_{\theta} f_\theta(x_1)\right]\right]
\, \\
+
\Ex{p_1(x_1)}\left[\nabla_{\theta} f_\theta(x_1)\, \right],
\end{aligned}
\end{equation}\vspace{-3mm}\\
where $\pi_{\theta}(x_1|x_0)$ is given by:
\vspace{-1mm}
\begin{equation*}
\hspace*{-2mm}\pi_{\theta}(x_1 \!\!\mid\! x_0)
\!=\!
\frac{1}{Z(f_{\theta},x_0)}
\exp\!\left(\!
\frac{f_{\theta}(x_1) \!-\! \tfrac{1}{2}\|x_0\!-\!x_1\|^{2}}{\varepsilon}
\right).
\end{equation*}
While flexible, \textbf{this approach is \emph{not simulation-free}}: each evaluation of
the loss or its gradient requires sampling from the model distribution
$\pi_{\theta}(x_1 \mid x_0)$, which is itself defined implicitly through the
neural potential $f_{\theta}$. In practice, this sampling step is carried out
using Markov chain Monte Carlo (MCMC) methods \citep{girolami2011riemann, hoffman2014no, samsonov2022localglobal}, such as Langevin dynamics.
However, MCMC-based sampling can be computationally expensive, sensitive to
hyperparameters, and slow to mix, especially in high-dimensional settings.

\textbf{LightSB solver.} An alternative strategy is proposed in LightSB~\citep{korotin2024light}. The authors introduce adjusted potential $v_{\theta}$ and parametrization:
\vspace{-1mm}
\begin{equation}
\pi_{\theta}(x_1|x_0)=\frac{\exp\big(\sfrac{\langle x_0,x_1\rangle}{\varepsilon}\big)v_{\theta}(x_1)}{c_{\theta}(x_0)},
\label{eq:lightsb-plan-parametric-full}
\end{equation}\vspace{-3mm}\\
where, $c_{\theta}(x_{0})\defeq\int_{\mathbb{R}^{D}}\exp\big(\sfrac{\langle x_0,x_1\rangle}{\varepsilon}\big)v_{\theta}(x_1)dx_1$ is the normalization. They then consider the optimization problem::
\vspace{-2mm}
\begin{equation}
\min_{v_{\theta}} \biggr\{ \Ex{p_0(x_0)}\log c_{\theta}(x_0)-\Ex{p_1(x_1)} \log v_{\theta}(x_1)\biggr\}.
\label{eq:lightsb-main-obj-feasible}
\end{equation}\vspace{-3mm}\\
This problem is equivalent to the problem \eqref{eq:dual-final} under a different parametrization, specifically:
\vspace{-1mm}
\begin{gather}
    v_{\theta}(x_1) = \exp(-\frac{\|x_1\|^{2}}{2\varepsilon})\exp(\frac{f_{\theta}(x_1)}{\varepsilon});
    \nonumber
    \\
    c_{\theta}(x_0) = \exp(\frac{\|x_0\|^{2}}{2\varepsilon})Z(f_{\theta}, x_0).
    \nonumber
\end{gather}\vspace{-3mm}\\
To solve the problem in practice, the authors of LightSB circumvent the intractability of normalization $c_{\theta}(x_0)$ (equivalent of $Z(f,x_0)$, eq. \eqref{eq:partition}) by directly parameterizing the potential $v_{\theta}(x_1)$ in form of a Gaussian mixture:
\begin{gather}
    v_{\theta}(x_1) =\sum_{k=1}^{K'} \alpha_k \,\mathcal{N}(x_1\vert r_k,\varepsilon S_k),
\end{gather}
where $\theta\defeq\{\alpha_{k},r_{k},S_{k}\}_{k=1}^{K'}$ are the parameters: $\alpha_{k}\geq 0$, $r_k\in\mathbb{R}^{D}$ and symmetric $0\!\prec\! S_{k}\!\in\!\mathbb{R}^{D\times D}$. Such a parameterization restricts conditional density $\pi(x_1 \!\mid\! x_0)$ to form:
\begin{gather}
    \pi_{\theta}(x_{1}|x_0)=\frac{1}{c_{\theta}(x_0)}\sum_{k=1}^{K'}\widetilde{\alpha}_{k}(x_0)\mathcal{N}(x_{1}|r_{k}(x_0),\varepsilon S_{k}),
    \nonumber
\end{gather}
where $\widetilde{\alpha}(x_0)$ and $r_k(x_0)$ are some functions of $\{\alpha_{k},r_{k},S_{k}\}_{k=1}^{K'}$ and $c_{\theta}(x_0)$ is given in a closed form:
\begin{gather}
    c_{\theta}(x_0) = \sum_{k=1}^{K'}\widetilde{\alpha}_k(x_0).
    \nonumber
\end{gather}
While this parameterization leads to a fully tractable and simulation-free objective, it \textbf{restricts the expressiveness of the method}, as it limits admissible transport plans to a rather narrow parametric family.

\textbf{Summary.} Existing weak dual solvers trade off between expressiveness and tractability: EgNOT supports flexible potentials but requires MCMC during training, while LightSB is simulation-free but restricts the conditional plan family. Below, we present our novel method, VarEOT, which takes the \textbf{best of two worlds} by enabling simulation-free training \emph{without} restricting $\pi(x_1\!\mid\! x_0)$ to a narrow parametric family.

\vspace{3mm}
\section{Variational Entropic Optimal Transport}\label{sec:method}
\vspace{-1mm}

In this section, we introduce our variational approach to entropic optimal transport. In \wasyparagraph\ref{sec:variational-dual}, we derive a new variational dual formulation of the EOT objective that replaces the intractable log-partition function with a tractable variational upper bound, yielding a fully differentiable and simulation-free training loss. Building on this formulation, \wasyparagraph\ref{sec:practical-algorithm}  presents a practical learning algorithm based on neural parameterization of the dual potential and the auxiliary variational function, together with a Langevin sampling procedure for generating transport samples. We further position our method relative to existing dual solvers and highlight its practical advantages. In \wasyparagraph\ref{sec:learning-guarantees} we conduct the analysis of our method from the perspectives of statistical learning theory (finite sampling learning guarantees and approximation with neural networks). All \underline{proofs} are provided in Appendix \ref{app:Proofs}. 

\vspace{-2mm}
\subsection{New Variational Dual Formulation of EOT}
\label{sec:variational-dual}
\vspace{-1mm}

Our goal is to propose a weak dual solver that, unlike EgNOT and LightSB, does not require simulation during training and allows for expressive parameterization.

A key challenge in this setting is differentiating through the partition function $\log Z(f,x_0)$: as a logarithm of an expectation, it cannot be unbiasedly estimated from finite samples in a straightforward way.
To overcome this difficulty, we adopt a variational approximation for the logarithm, which allows us to construct a tractable, differentiable surrogate for the dual objective. The details of this variational approach are formalized in the proposition below. 

\begin{proposition}[Variational bound for the partition function]
\label{lem:variational-Z}
The logarithm of partition function $\log Z(f, x_0)$
admits the variational upper bound:
\begin{equation}
\begin{aligned}
&\log Z(f,x_0)
\le
-1 + \xi(x_0) + \frac{D}{2}\log (2\pi\varepsilon) + \\
&\Ex{z\sim\mathcal{N}\!(0, I)}\bigg[
\exp\Bigl(\frac{f(x_0+\sqrt{\varepsilon}z)}{\varepsilon} - \xi(x_0)
\Bigr)\bigg],
\end{aligned}
\end{equation}
where $\xi : \bbR^D \rightarrow \bbR$ is an arbitrary integrable function. 
The upper bound is tight when
\begin{equation}\label{eq:optimalxi}
\begin{aligned}
\xi_f(x_0) = \log Z(f, x_0) - \frac{D}{2} \log(2\pi\varepsilon),
\end{aligned}
\end{equation}
i.e., at the (shifted by a constant) log partition function.
\end{proposition}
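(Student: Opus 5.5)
The plan is to combine the elementary inequality $\log u \le u e^{-\xi} + \xi - 1$, valid for all $u>0$ and $\xi\in\mathbb{R}$, with a rewriting of $Z(f,x_0)$ as a Gaussian expectation. The inequality follows from concavity of the logarithm: $\log u - \xi = \log(u e^{-\xi}) \le u e^{-\xi} - 1$, i.e.\ from $\log t \le t-1$ for $t>0$. Equality holds iff $t=1$, that is, iff $\xi = \log u$. Equivalently, $\log u = \min_{\xi\in\mathbb{R}}\{u e^{-\xi} + \xi - 1\}$; this is the exact variational representation of the logarithm. We apply it pointwise in $x_0$, so any measurable $\xi:\mathbb{R}^D\to\mathbb{R}$ is admissible; integrability is only needed later, when we take $\mathbb{E}_{p_0}$.

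Next we rewrite the partition function \eqref{eq:partition} as a Gaussian expectation. Substitute $x_1 = x_0 + \sqrt{\varepsilon} z$, so that $dx_1 = \varepsilon^{D/2} dz$ and $\tfrac{1}{2\varepsilon}\|x_0-x_1\|^2 = \tfrac12\|z\|^2$. This gives
\begin{equation*}
Z(f,x_0) = \varepsilon^{D/2}\int_{\mathbb{R}^D} \exp\!\Big(\tfrac{f(x_0+\sqrt{\varepsilon}z)}{\varepsilon}\Big) e^{-\|z\|^2/2}\, dz = (2\pi\varepsilon)^{D/2}\, \mathbb{E}_{z\sim\mathcal{N}(0,I)}\Big[\exp\Big(\tfrac{f(x_0+\sqrt{\varepsilon}z)}{\varepsilon}\Big)\Big].
\end{equation*}
Taking logarithms, $\log Z(f,x_0) = \tfrac{D}{2}\log(2\pi\varepsilon) + \log \widetilde{Z}(x_0)$, where $\widetilde{Z}(x_0)$ denotes the Gaussian expectation above.

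We then apply the inequality with $u=\widetilde{Z}(x_0)$ and $\xi=\xi(x_0)$. Pulling the deterministic factor $e^{-\xi(x_0)}$ inside the expectation gives
\begin{equation*}
\log \widetilde{Z}(x_0) \le -1 + \xi(x_0) + \mathbb{E}_z\Big[\exp\Big(\tfrac{f(x_0+\sqrt{\varepsilon}z)}{\varepsilon} - \xi(x_0)\Big)\Big].
\end{equation*}
Adding $\tfrac{D}{2}\log(2\pi\varepsilon)$ to both sides yields the claimed bound. The bound is tight iff $\xi(x_0) = \log\widetilde{Z}(x_0) = \log Z(f,x_0) - \tfrac{D}{2}\log(2\pi\varepsilon)$, which is \eqref{eq:optimalxi}.

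The main obstacle is a technical caveat rather than a computation: we need $0<Z(f,x_0)<\infty$ so that the logarithm is finite. Positivity holds because $f$ is finite on $\mathrm{supp}(p_1)$, which has positive Lebesgue measure. If $Z=+\infty$, the right-hand side is also $+\infty$, so the inequality holds trivially in the extended reals. I would state this assumption explicitly, as is implicit in the dual \eqref{eq:dual-final}.
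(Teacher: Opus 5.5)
Your proof is correct and matches the paper's argument: the change of variables $x_1 = x_0+\sqrt{\varepsilon}z$, then the pointwise inequality $\log t\le t-1$ with $t=\widetilde Z(x_0)e^{-\xi(x_0)}$, with equality iff $t=1$. The paper absorbs $\frac{D}{2}\log(2\pi\varepsilon)$ into the exponent before shifting, whereas you split it off first, which is only cosmetic; your handling of the case $Z=+\infty$ is a small point the paper omits.
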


Thanks to the obtained estimate, we can obtain tractable simulation-free loss:

\begin{theorem}[Variational dual form of EOT]
\label{prop:euclidean-dual}
Let
\begin{equation}
\label{eq:final-loss}
\begin{aligned}
\mathcal{L}(f, \xi) \defeq \, &\sunset{\varepsilon \Big(1 - \frac{D}{2}\log(2 \pi \varepsilon)\Big)} + \\
\Ex{x_1\sim p_1}\left[f(x_1)\right] &-\varepsilon\Ex{x_0\sim p_0}\left[\xi(x_0)\right]
-\\
\varepsilon\!\Ex{x_0\sim p_0}\!\bigg[\Ex{z\sim\mathcal{N}\!(0, I)}&
\!\!\!\exp\Bigl(\!\frac{f(x_0+\sqrt{\varepsilon}z)}{\varepsilon} - \xi(x_0)
\!\Bigr)\bigg],
\end{aligned}
\end{equation}
Then the entropic optimal transport weak dual formulation admits the following variational form:
\begin{equation}
\mathrm{EOT}_{\varepsilon}(p_0,p_1)
=
\sup_f\mathcal{L}(f)
=
 \sup_{f, \xi} \mathcal{L}(f, \xi).
\end{equation}
The optimal solution $(f^*, \xi_f^*)$, where $\xi_f$ set by (\ref{eq:optimalxi}), recovers 
\begin{gather}
    \!\!\! \pi^*(x_1|x_0) \!=\! \frac{(2\pi \varepsilon)^{-\frac{D}{2}}}{\exp(\xi_f(x_0))}
\exp\!\left(
\frac{f^*(x_1) - \tfrac{1}{2}\|x_0-x_1\|^{2}}{\varepsilon}
\right)
\nonumber
\end{gather}

\end{theorem}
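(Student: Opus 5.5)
The plan is to reduce everything to Proposition~\ref{lem:variational-Z} together with the weak duality result \eqref{eq:dual-final} of \citep[Theorem~1]{mokrov2024energyguided}. The proof has three steps: a pointwise inequality in $x_0$, an exchange of the two suprema, and a substitution that gives the plan formula.

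\textbf{Step 1: pointwise sandwich.} First I restate the partition function as a Gaussian expectation. Substituting $x_1 = x_0 + \sqrt{\varepsilon} z$ in \eqref{eq:partition} gives
\begin{equation*}
Z(f,x_0) = (2\pi\varepsilon)^{D/2}\,\Ex{z\sim\mathcal{N}(0,I)}\exp\bigl(f(x_0+\sqrt{\varepsilon}z)/\varepsilon\bigr).
\end{equation*}
Proposition~\ref{lem:variational-Z} is this identity combined with the elementary inequality $\log t \le t-1$, applied at $t = u e^{-\xi}$ with $u = \E_z[\exp(f(x_0+\sqrt{\varepsilon}z)/\varepsilon)]$. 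Equality holds exactly when $t=1$, i.e.\ when $\xi = \log u = \xi_f(x_0)$.

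Multiplying the bound of Proposition~\ref{lem:variational-Z} by $-\varepsilon$ and integrating over $x_0\sim p_0$ gives, for every $\xi$,
\begin{equation*}
-\varepsilon\Ex{p_0}\log Z(f,x_0) \ \ge\ \varepsilon\Bigl(1-\tfrac{D}{2}\log(2\pi\varepsilon)\Bigr) - \varepsilon\Ex{p_0}\xi(x_0) - \varepsilon\Ex{p_0}\Ex{z}\exp\Bigl(\tfrac{f(x_0+\sqrt{\varepsilon}z)}{\varepsilon}-\xi(x_0)\Bigr).
\end{equation*}
Adding $\E_{p_1} f$ to both sides shows $\mathcal{L}(f,\xi)\le\mathcal{L}(f)$, with equality at $\xi=\xi_f$. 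The constant $\varepsilon(1-\tfrac{D}{2}\log(2\pi\varepsilon))$ in \eqref{eq:final-loss} is precisely what this rearrangement produces.

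\textbf{Step 2: exchange of suprema.} Step 1 gives $\sup_\xi \mathcal{L}(f,\xi)=\mathcal{L}(f)$ for each fixed $f$. Hence
\begin{equation*}
\sup_{f,\xi}\mathcal{L}(f,\xi)=\sup_f\sup_\xi\mathcal{L}(f,\xi)=\sup_f\mathcal{L}(f)=\mathrm{EOT}_\varepsilon(p_0,p_1),
\end{equation*}
where the last equality is \citep[Theorem~1]{mokrov2024energyguided}. It follows that if $f^*$ maximizes $\mathcal{L}(f)$, then $(f^*,\xi_{f^*})$ maximizes $\mathcal{L}(f,\xi)$.

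\textbf{Step 3: plan formula.} Substituting $Z(f^*,x_0) = (2\pi\varepsilon)^{D/2}e^{\xi_{f^*}(x_0)}$, which is a rearrangement of \eqref{eq:optimalxi}, into \eqref{eq:cond_pi_star} yields the stated expression for $\pi^*(x_1|x_0)$.

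\textbf{Main obstacle: measure-theoretic bookkeeping.} I expect the delicate part to be the domain of $\xi$ and the edge cases, not the algebra. Several points need care.
\begin{itemize}
\item The optimal $\xi_f$ must be an admissible (integrable) function.
\item $\E_{p_0}\log Z(f,x_0)$ must be finite. Where $Z=+\infty$ or $\mathcal{L}(f)=-\infty$, both sides are $-\infty$ and the inequality holds trivially.
\item The convention $f\equiv-\infty$ off $\mathrm{supp}(p_1)$ must remain compatible with the Gaussian reparametrization.
\end{itemize}
To handle these, I would restrict to $f$ with $\mathcal{L}(f)>-\infty$. This costs nothing for the supremum and makes $\log Z(f,\cdot)\in L_1(p_0)$, so $\xi_f$ is admissible and the exchange of suprema in Step 2 is legitimate.
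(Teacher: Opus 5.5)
Your proposal is correct and follows the paper's proof essentially step for step. You integrate the pointwise bound of Proposition~\ref{lem:variational-Z} against $p_0$ to get $\mathcal{L}(f,\xi)\le\mathcal{L}(f)$ with equality at $\xi=\xi_f$, take the iterated supremum together with the weak duality of \citep[Theorem~1]{mokrov2024energyguided}, and substitute $Z(f^*,x_0)=(2\pi\varepsilon)^{D/2}e^{\xi_{f^*}(x_0)}$ into \eqref{eq:cond_pi_star}. Your integrability bookkeeping adds something the paper leaves implicit: you restrict to $f$ with $\mathcal{L}(f)>-\infty$, which combined with the weak-duality upper bound $\mathcal{L}(f)\le\mathrm{EOT}_\varepsilon(p_0,p_1)$ makes $\log Z(f,\cdot)$, and hence $\xi_f$, integrable.
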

\vspace{-2mm}
This novel variational dual form allows us to overcome the original problem of estimation of log partition function $\log Z(f, x_0)$ in the weak dual form \eqref{eq:dual-final}. 

For convenience we define:
\vspace{-1mm}
\begin{equation*}
\begin{aligned}
    \pi^{f,\xi}(x_0,x_1\!) &\!\defeq\! \frac{(2\pi \varepsilon)^{-\frac{D}{2}} p_0(x_0\!)}{\exp(\xi(x_0))}\!\exp\!\bigg[\!\frac{f(x_1)\!-\!\frac{1}{2}\|x_0\!-\!x_1\|^2}{\varepsilon}\bigg]; \\
    \pi^{f}(x_0, x_1) &\defeq \pi^{f, \xi_f} (x_0, x_1),
\end{aligned}
\end{equation*}\vspace{-3mm}\\
where $\xi_f$ is due to \eqref{eq:optimalxi}.
The following theorem establishes that the VarEOT optimality gap directly corresponds to the KL discrepancy between the recovered measure $\pi^{f,\xi}$ ($\pi^f$) and the EOT plan $\pi^*$.  

\begin{theorem}\label{prop:fxiequality}
    For any $f\in L_1(p_1)$ and $\xi\in L_1(p_0)$,
\begin{equation}
\label{eq:excess-kl}
\varepsilon
    \mathrm{KL}\!\left(\pi^\ast \,\middle\|\, \pi^{ f}\right)
    \le
    \varepsilon\,\mathrm{KL}\!\left(\pi^* \,\middle\|\, \pi^{f,\xi}\right)
    =
    \mathcal{L}^* - \mathcal{L}(f,\xi),
\end{equation}
where $\mathrm{KL}(\cdot\|\cdot)$ denotes the Kullback--Leibler divergence between non-negative measures (see \underline{Definition}~\ref{def:KLdiv} in Appendix \ref{app:Proofs}), and $\mathcal{L}^*$ is the optimal value of the objective \eqref{eq:final-loss}.
\end{theorem}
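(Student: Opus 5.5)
The plan is to compute $\mathcal{L}^* - \mathcal{L}(f,\xi)$ explicitly and match it term by term with $\varepsilon\,\mathrm{KL}(\pi^*\|\pi^{f,\xi})$. For non-negative measures I use the generalized divergence $\mathrm{KL}(\mu\|\nu)=\int \mu\log\frac{\mu}{\nu} - \int \mu + \int \nu$, as in Definition~\ref{def:KLdiv}. By Theorem~\ref{prop:euclidean-dual}, $\mathcal{L}^*=\mathcal{L}(f^*,\xi_{f^*})=\mathcal{L}(f^*)$. Moreover $\pi^*=\pi^{f^*,\xi_{f^*}}$ is a probability measure with marginals $p_0,p_1$, and its density is $\log\pi^*=\log p_0(x_0)-\tfrac D2\log(2\pi\varepsilon)-\xi_{f^*}(x_0)+\frac{f^*(x_1)-\frac12\|x_0-x_1\|^2}{\varepsilon}$.

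\textbf{Step 1: the log-ratio.} Writing out $\log\frac{\pi^*}{\pi^{f,\xi}}$, the terms $\log p_0$, the Gaussian constant and the quadratic cost all cancel, leaving
\[
\frac{f^*(x_1)-f(x_1)}{\varepsilon}+\xi(x_0)-\xi_{f^*}(x_0).
\]
Integrating this against $\pi^*$ and using its marginals gives
\[
\varepsilon\int\pi^*\log\frac{\pi^*}{\pi^{f,\xi}}=\E_{p_1}[f^*-f]+\varepsilon\,\E_{p_0}[\xi-\xi_{f^*}].
\]
The integrability hypotheses $f\in L_1(p_1)$ and $\xi\in L_1(p_0)$ justify splitting the integral. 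The term $-\int\pi^*$ equals $-1$.

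\textbf{Step 2: the mass of $\pi^{f,\xi}$.} Substituting $x_1=x_0+\sqrt{\varepsilon}z$, the Gaussian factor $(2\pi\varepsilon)^{-D/2}\exp(-\|x_1-x_0\|^2/2\varepsilon)$ becomes the standard normal density in $z$. Hence
\[
\int\pi^{f,\xi}=\E_{p_0}\E_{z}\exp\!\Big(\frac{f(x_0+\sqrt\varepsilon z)}{\varepsilon}-\xi(x_0)\Big),
\]
which is exactly the expectation term appearing in \eqref{eq:final-loss}.

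\textbf{Step 3: assembling.} Putting the pieces together,
\[
\varepsilon\,\mathrm{KL}(\pi^*\|\pi^{f,\xi}) = \E_{p_1}f^* - \varepsilon\E_{p_0}\xi_{f^*} - \varepsilon - \Big(\E_{p_1}f - \varepsilon\E_{p_0}\xi - \varepsilon\,\E\E\exp(\cdot)\Big).
\]
By \eqref{eq:final-loss}, the bracket equals $\mathcal{L}(f,\xi)-c$ with $c=\varepsilon(1-\tfrac D2\log(2\pi\varepsilon))$. It remains to verify that the leading part equals $\mathcal{L}^*-c$. Evaluating $\mathcal{L}(f^*,\xi_{f^*})$ using Step 2, the exponential term has mass $\int\pi^*=1$, so it contributes $-\varepsilon$. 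Thus $\mathcal{L}^*=c+\E f^*-\varepsilon\E\xi_{f^*}-\varepsilon$, which is exactly what is needed, and the equality in \eqref{eq:excess-kl} follows.

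\textbf{Step 4: the inequality.} Since $\xi$ enters $\mathcal{L}(f,\xi)$ separately from $f$, Proposition~\ref{lem:variational-Z} gives $\mathcal{L}(f,\xi)\le\mathcal{L}(f,\xi_f)$ pointwise in $x_0$, as $\xi_f$ is the tight choice. Applying the equality just proved with $\xi=\xi_f$ yields $\varepsilon\,\mathrm{KL}(\pi^*\|\pi^f)=\mathcal{L}^*-\mathcal{L}(f,\xi_f)\le\mathcal{L}^*-\mathcal{L}(f,\xi)$.

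\textbf{Main obstacle.} The delicate part is measure-theoretic. If $\xi_f$ is infinite, for instance when $Z(f,x_0)=\infty$, or if $\int\pi^{f,\xi}=\infty$, then both sides should equal $+\infty$ and need separate handling. The splitting of $\int\pi^*\log(\pi^*/\pi^{f,\xi})$ also relies on $f^*\in L_1(p_1)$ and $\xi_{f^*}\in L_1(p_0)$, which I would take from the duality results of \citep{mokrov2024energyguided}. Finally, $f=-\infty$ outside $\mathrm{supp}(p_1)$ is harmless, since $\pi^*$ vanishes there.
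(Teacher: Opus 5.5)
Your proposal is correct and follows essentially the same route as the paper. Both arguments integrate the log-ratio against $\pi^*$ using its marginals, compute the total mass of $\pi^{f,\xi}$ via the substitution $x_1=x_0+\sqrt{\varepsilon}z$, and match constants using that the exponential term equals $1$ at $(f^*,\xi_{f^*})$. Your Step~4 derives the inequality $\mathrm{KL}(\pi^*\|\pi^f)\le\mathrm{KL}(\pi^*\|\pi^{f,\xi})$ from the optimality of $\xi_f$ in Proposition~\ref{lem:variational-Z}. The paper's proof leaves that inequality implicit, and uses the same argument in the proof of Proposition~\ref{prop:est-approx-decay}.
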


Our Theorem \ref{prop:fxiequality} certifies that optimizing objective \eqref{eq:final-loss} directly enables us to approximate the ground truth EOT plan. Additionally, eq. \eqref{eq:excess-kl} suggests that at inference stage it is better to use $\pi^f$, not $\pi^{f, \xi}$, which we exploit in our practical implementation, see the next section \wasyparagraph\ref{sec:practical-algorithm}.

\vspace{-2mm}
\subsection{Practical Algorithm}
\label{sec:practical-algorithm}
\vspace{-1mm}

\begin{algorithm}[tb]
\caption{Training procedure for Variational Entropic Optimal Transport (VarEOT)}
\label{alg:vareot}
\begin{algorithmic}
\STATE {\bfseries Input:}
samples from distributions $p_0$ and $p_1$; \\
entropy regularization parameter $\varepsilon > 0$; \\
batch sizes $N_0, N_1$; number of noise samples $K$; \\
potential network $\hat{f}_{\theta} : \mathbb{R}^D \rightarrow \mathbb{R}$; \\
auxiliary network $\hat{\xi}_{\psi} : \mathbb{R}^D \rightarrow \mathbb{R}$. 
\STATE {\bfseries Output:} trained potential $\hat{f}_{\theta^\ast}$.
\vspace{1mm}

\FOR{each training iteration}
    \STATE Sample mini-batches $\{x_i^0\}_{i=1}^{N_0} \sim p_0$,\; $\{x_j^1\}_{j=1}^{N_1} \sim p_1$;
    \STATE Sample i.i.d.\ noise variables $z_{i,k} \sim \mathcal{N}(0,I)$ for $i=1,\dots,N_0$, $k=1,\dots,K$;
    \STATE Compute the empirical loss $\hat{\mathcal{L}}$ according to eq. \eqref{eq:empirical-loss};
    \STATE Update $\psi,\theta$ by using the gradients $\nabla_{\psi}\hat{\mathcal{L}},\nabla_{\theta}\hat{\mathcal{L}}$;
\ENDFOR
\end{algorithmic}
\end{algorithm}

\textbf{Training.} The variational formulation derived in \cref{prop:euclidean-dual} leads to a
fully tractable and simulation-free training optimization objective, given in \eqref{eq:final-loss}. In practice, we parametrize both the potential $f : \mathbb{R}^D \to \mathbb{R}$ and the auxiliary variational
function $\xi : \mathbb{R}^D \to \mathbb{R}$ by neural networks, denoted by
$f_{\theta}$ and $\xi_{\psi}$, respectively

The resulting training procedure consists of maximizing an empirical estimate
of the variational dual loss with respect to both parameter sets $\theta$ and
$\psi$. Crucially, both networks $f_\theta$ and $\xi_\psi$ are trained jointly in a single optimization loop, with no alternating or adversarial steps required. Given mini-batches of samples
$\{x_i^0\}_{i=1}^{N_0} \sim p_0$ from the source distribution and
$\{x_j^1\}_{j=1}^{N_1} \sim p_1$ from the target distribution, the expectations
in eq. \eqref{eq:final-loss} are approximated using Monte Carlo sampling with
i.i.d.\ Gaussian noise variables
$z_{i,k} \sim \mathcal{N}(0,I)$, yielding the empirical loss, up to an \sunset{additive constant} in \eqref{eq:final-loss}:
\begin{equation}
\label{eq:empirical-loss}
\begin{aligned}
\widehat{\mathcal{L}}(f_{\theta}, \xi_{\psi}) 
\defeq 
\frac{1}{N_1} \sum_{j=1}^{N_1}{f_{\theta}(x_j^1)}
-
\varepsilon\frac{1}{N_0} \sum_{i=1}^{N_0}
\xi_{\psi}(x_i^0)
- \\
\frac{\varepsilon}{N_0K}
\sum_{i,k=1}^{N_0,K}
\exp\!\left(
\frac{f_{\theta}(x_i^0 + \sqrt{\varepsilon}\, z_{i,k})}{\varepsilon} - \xi_{\psi}(x_i^0)
\right).
\end{aligned}
\end{equation}
Importantly, \textbf{this approximation \textit{does
not require sampling from the model distribution itself}}, in contrast to
energy-based approaches such as EgNOT. The complete training procedure is
summarized in \cref{alg:vareot}. To mitigate numerical instability arising from the exponential terms in \eqref{eq:empirical-loss}, we clip their values prior to gradient computation and additionally apply gradient clipping during training; see \underline{Appendix~\ref{sec-exp-details}} for details.

\textbf{Inference.} Following the hint in equation \eqref{eq:excess-kl}, after the training we only use the potential function $\hat{f}_{\theta}$ for inference, whereas the function $\hat{\xi}_{\psi}$ is not used.
The corresponding entropic optimal transport plan is implicitly defined via the conditional distribution \eqref{eq:cond_pi_star}. To generate samples from this conditional distribution, we employ Langevin dynamics targeting the unnormalized density
\[
\pi(x_1 \mid x_0) \propto
\exp\!\left(\frac{
\hat{f}_{\theta}(x_1)
-
\frac{1}{2}\|x_1 - x_0\|^2}{\varepsilon}
\right).
\]
The resulting sampling procedure is detailed in
\cref{alg:vareot-sampling}. In all experiments, we find that $10^1$-$10^3$ Langevin steps with an appropriate step size suffice for high-quality samples.

\begin{algorithm}[tb]
\caption{Langevin sampling from the VarEOT conditional distribution}
\label{alg:vareot-sampling}
\begin{algorithmic}
\STATE {\bfseries Input:}
source sample $x_0 \sim p_0$; \\
trained potential network $\hat{f}_{\theta}$; \\
entropy regularization $\varepsilon > 0$; \\
number of Langevin steps $S$; step size $\eta > 0$.%

\STATE {\bfseries Output:} sample $x_1 \sim \pi(\cdot \mid x_0)$.

\vspace{1mm}
\STATE Initialize $x_1^{(0)} = x_0$;
\FOR{$s = 1$ {\bfseries to} $S$}
    \STATE Sample $z^{(s)} \sim \mathcal{N}(0, I)$;
    \STATE $h = \frac{1}{\varepsilon}\left(
    \nabla_{x_1} \hat{f}_{\theta}(x_1^{(s-1)})
    -
    \bigl(x_1^{(s-1)} - x_0\bigr)
    \right)$;
    \STATE Update:
    \vspace*{-4mm}\\
    \[
    x_1^{(s)} \leftarrow
    x_1^{(s-1)}
    +
    \eta h
    +
    \sqrt{2\eta}\, z^{(s)};
    \]
    \vspace*{-5mm}\\
\ENDFOR
\STATE {\bfseries Return} $x_1^{(S)}$
\end{algorithmic}
\end{algorithm}

\begin{figure*}[!t]
\begin{subfigure}[b]{0.245\linewidth}
\centering
\includegraphics[width=0.995\linewidth]{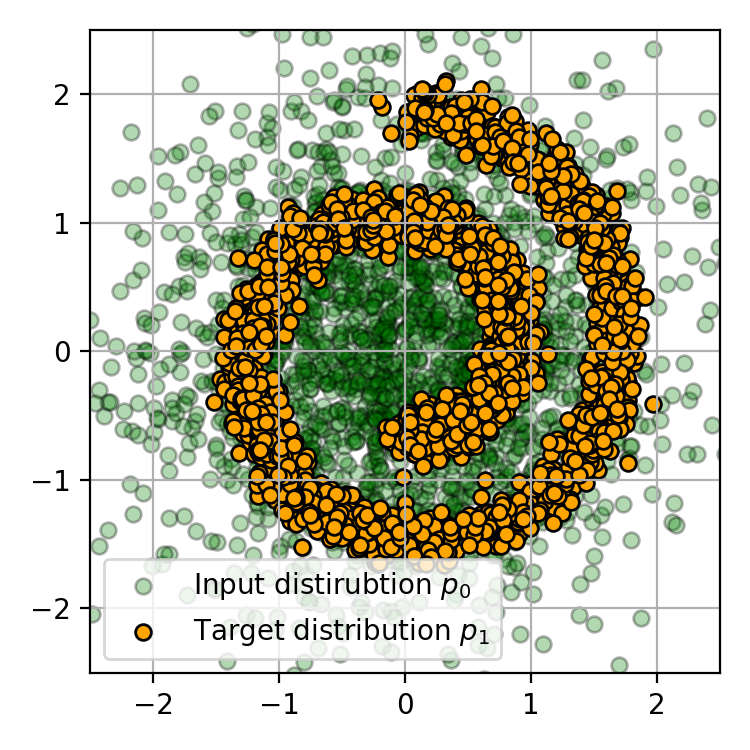}
\caption{\centering ${x_0\sim p_0
}$, ${x_1 \sim p_1}.$}
\vspace{-1mm}
\end{subfigure}
\vspace{-1mm}\hfill\begin{subfigure}[b]{0.245\linewidth}
\centering
\includegraphics[width=0.995\linewidth]{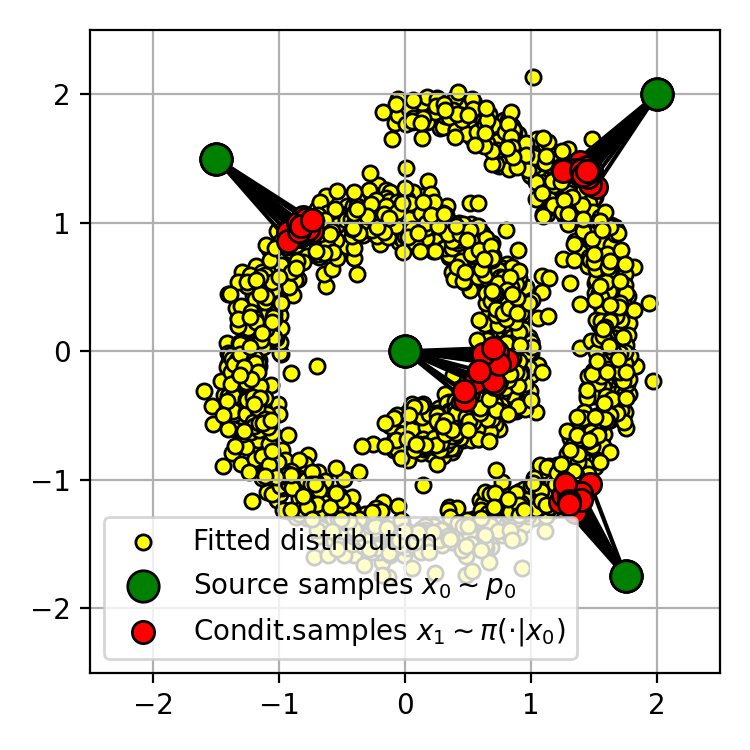}
\caption{\centering $\varepsilon=0.01$.}
\vspace{-1mm}
\end{subfigure}
\hfill\begin{subfigure}[b]{0.245\linewidth}
\centering
\includegraphics[width=0.995\linewidth]{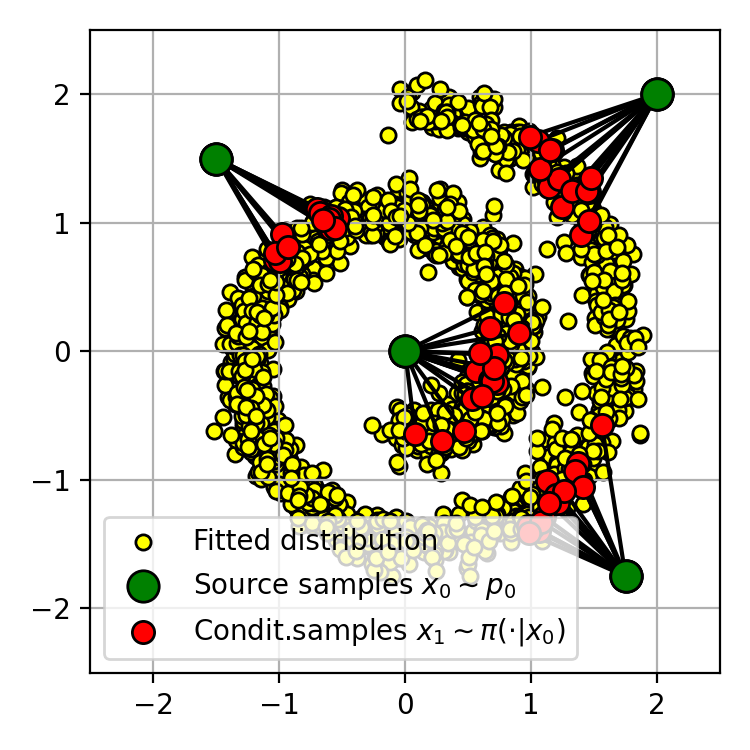}
\caption{\centering $\varepsilon=0.1$.}
\vspace{-1mm}
\end{subfigure}
\hfill\begin{subfigure}[b]{0.245\linewidth}
\centering
\includegraphics[width=0.995\linewidth]{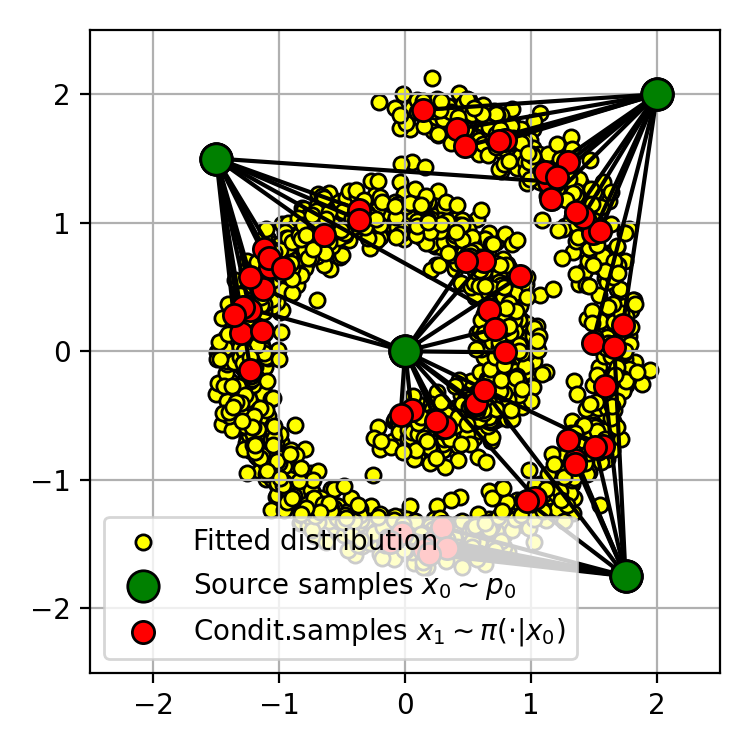}
\caption{\centering $\varepsilon=1.0$.}
\vspace{-1mm}
\end{subfigure}
\vspace{+1mm}
\captionsetup{font=small}
\caption{\centering Optimal plan learned with VarEOT \textbf{(ours)} in \textit{Gaussian} $\!\rightarrow\!$ \textit{Swiss roll} example.}
\label{fig:swiss-roll}
\vspace{-3mm}
\end{figure*}

To highlight the practical advantages of VarEOT, we compare it with two
representative dual-solver approaches: EgNOT~\citep{mokrov2024energyguided} and
LightSB~\citep{korotin2024light} (cf. \wasyparagraph\ref{sec:existing-solvers}). Table~\ref{tab:method-comparison} summarizes
their properties in terms of simulation-free training and structural
constraints on the transport plan.

\begin{table}[h!]
\centering
\caption{\centering Comparison of dual solvers for entropic OT. ``Simulation-free'' indicates
whether the method avoids sampling from the model during training.
``Not Restricted parameterization'' indicates whether the conditional plan is not restricted to a predefined family (e.g., Gaussian mixtures).}
\label{tab:method-comparison}
\vspace{1mm}
\begin{tabular}{l c c}
\toprule
Method & \shortstack{Simulation-free\\training} & \shortstack{Not restricted\\parameterization} \\
\midrule
EgNOT     & \textcolor{red}{$\times$} & \textcolor{green!60!black}{$\checkmark$} \\
LightSB   & \textcolor{green!60!black}{$\checkmark$} & \textcolor{red}{$\times$} \\
VarEOT \textbf{(ours)} & \textcolor{green!60!black}{$\checkmark$} & \textcolor{green!60!black}{$\checkmark$} \\
\bottomrule
\end{tabular}
\vspace{-2mm}
\end{table}

VarEOT combines the best of both worlds: it is
\textit{simulation-free training}, unlike EgNOT, and it imposes no
\textit{restricted parameterization} on the conditional plan, unlike LightSB.

\vspace{-2mm}
\subsection{Finite Sample Learning Guarantees}\label{sec:learning-guarantees}
\vspace{-1mm}

In this subsection we quantify the discrepancy between the transport plan recovered by VarEOT and the ground truth EOT solution. Our method does not have access to $(p_0,p_1)$ explicitly; instead it works with finite samples (see our computational setup \S\ref{sec:practical-setup}) and with
potentials restricted to parametric classes. This naturally introduces several sources of error in practice: {\it finite-sample error} (we only observe i.i.d. datasets of sizes $N$ and $M$ from $p_0$ and $p_1$); {\it function class restriction} (we optimize over a class of neural networks rather than over all continuous potentials); {\it optimization error} (stochastic training, finite-time optimization).
Our rigorous theoretical analysis below focuses on the first two items. The third item depends on the chosen optimizer and sampling scheme and is treated separately in practice.

Throughout, the function classes $\cF$ and $\Xi$ are understood as classes of feedforward neural networks with biases, prescribed depth and width, uniformly bounded layer norms, and clipped outputs. Here, clipped outputs mean that the final network output is truncated to lie in a fixed bounded interval. We additionally assume that the hidden activation $\sigma$ is $L_\sigma$-Lipschitz, non-polynomial and satisfies $\sigma(0)=0$. Thus,
\vspace{-0mm}
\begin{equation}\label{eq:classesFXI}
\cF \;:=\; \mathsf{NN}_\theta^{\sigma}(D,1), \quad
\Xi \;:=\; \mathsf{NN}_\eta^{\sigma}(D,1),
\end{equation}
We begin with a decomposition of the excess $\mathrm{KL}$ discrepancy between the ground-truth EOT optimizer $\pi^\ast$ and the learned plan $\pi^{\hat f}$ (cf. \eqref{eq:excess-kl}), where
\begin{align*}
(\hat f,\hat \xi)
=
\argmax_{f\in\cF,\ \xi\in\Xi}\widehat{\mathcal L}(f,\xi).
\end{align*}
Here $\xi$ enters only through the training objective, whereas the transport plan used at inference time is determined exclusively by the learned potential $\hat f$; accordingly, the final discrepancy is measured in terms of $\pi^{\hat f}$, cf. \S\ref{sec:practical-algorithm}.
\begin{proposition}\label{prop:est-approx-decay}
    The following bound holds:
    \begin{equation}
    \begin{aligned}
    \hspace*{-3mm}\varepsilon\mathbb{E}\!\Big[
    \mathrm{KL}\big(\pi^\ast \,\Vert\, \pi^{\hat f}\big)
    \!\Big]
    \!\le &
    \underbrace{
    2\mathbb{E}\!\!\!\!\sup_{f\in\cF,\ \xi\in\Xi}\!\left|{\mathcal L}(f,\xi)\!-\!\widehat{{\mathcal L}}(f,\xi)\right|
    }_{\text{Estimation error}}
    +\\[0.4em]
    &\underbrace{
    \mathcal{L}^*-\sup_{f\in\cF, \xi\in\Xi}\mathcal{L}(f,\xi)
    }_{\text{Approximation error}},
    \end{aligned}
    \end{equation}
    where the expectations are taken w.r.t. the random realization of the datasets.
\end{proposition}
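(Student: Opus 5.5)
We start from Theorem~\ref{prop:fxiequality} applied to the learned pair $(\hat f,\hat\xi)$. It gives
\[
\varepsilon\,\mathrm{KL}\big(\pi^\ast\Vert\pi^{\hat f}\big)\le \mathcal{L}^*-\mathcal{L}(\hat f,\hat\xi).
\]
So it suffices to bound the suboptimality gap $\mathcal{L}^*-\mathcal{L}(\hat f,\hat\xi)$ of the empirical maximizer in the population objective, and then take expectations over the datasets. The elements of $\cF$ and $\Xi$ are bounded continuous functions with clipped outputs. Hence they lie in $L_1(p_1)$ and $L_1(p_0)$, so the theorem applies. (The ``$f\equiv-\infty$ outside $\mathrm{supp}(p_1)$'' convention is not needed for the KL identity as stated.)

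Next we use the standard three-term decomposition. Fix any $\delta>0$ and pick $(f_\delta,\xi_\delta)\in\cF\times\Xi$ with $\mathcal{L}(f_\delta,\xi_\delta)\ge\sup_{\cF\times\Xi}\mathcal{L}-\delta$. The supremum may not be attained, so we work with a near-maximizer. Write
\begin{align*}
\mathcal{L}^*-\mathcal{L}(\hat f,\hat\xi)
={}&\big[\mathcal{L}^*-\mathcal{L}(f_\delta,\xi_\delta)\big]
+\big[\mathcal{L}(f_\delta,\xi_\delta)-\widehat{\mathcal{L}}(f_\delta,\xi_\delta)\big]\\
&+\big[\widehat{\mathcal{L}}(f_\delta,\xi_\delta)-\widehat{\mathcal{L}}(\hat f,\hat\xi)\big]
+\big[\widehat{\mathcal{L}}(\hat f,\hat\xi)-\mathcal{L}(\hat f,\hat\xi)\big].
\end{align*}
We bound the four brackets in turn.
\begin{itemize}
\item The first bracket is at most $\mathcal{L}^*-\sup_{\cF\times\Xi}\mathcal{L}+\delta$.
\item The third bracket is $\le 0$, because $(\hat f,\hat\xi)$ maximizes $\widehat{\mathcal{L}}$ over $\cF\times\Xi$.
\item The second and fourth brackets are each bounded by $\sup_{f\in\cF,\xi\in\Xi}|\mathcal{L}(f,\xi)-\widehat{\mathcal{L}}(f,\xi)|$.
\end{itemize}
Taking expectations over the datasets, the approximation term is deterministic. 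Letting $\delta\to 0$ then yields the claimed bound with the factor $2$ on the estimation error.

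The main obstacle is bookkeeping rather than a deep step, in two places. The first is the definition of $\widehat{\mathcal{L}}$. Equation~\eqref{eq:empirical-loss} omits the additive constant $\varepsilon(1-\tfrac D2\log(2\pi\varepsilon))$. We will interpret $\widehat{\mathcal{L}}$ in the proposition as the empirical version of \eqref{eq:final-loss} including that constant. This does not change the argmax, and it makes $\mathcal{L}-\widehat{\mathcal{L}}$ a genuine deviation of empirical means. The same care applies to the inner Gaussian expectation: it is either kept exact or treated as part of the sampling randomness, and in either case the sup-deviation term absorbs it.

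The second place is measurability. The suprema and the argmax must be well defined as random variables. We handle this by noting that the networks are continuous in their parameters, which range over a compact set because the layer norms are uniformly bounded. Therefore the suprema can be taken over a countable dense subset of parameters, and a measurable selection of $(\hat f,\hat\xi)$ exists. Finally, we note that $\mathcal{L}^*$ is finite and equals $\mathrm{EOT}_\varepsilon(p_0,p_1)$ by Theorem~\ref{prop:euclidean-dual}, so every quantity above is finite.
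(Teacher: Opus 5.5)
Your proposal is correct and is essentially the paper's proof. You reduce via Theorem~\ref{prop:fxiequality}, using that $\xi_{\hat f}$ is the optimal normalizer, so $\varepsilon\,\mathrm{KL}(\pi^\ast\Vert\pi^{\hat f})\le\mathcal L^*-\mathcal L(\hat f,\hat\xi)$; you then split off the approximation error and bound the remaining gap by two uniform deviations using the empirical optimality of $(\hat f,\hat\xi)$. The only cosmetic difference is that you work with a $\delta$-near-maximizer of $\mathcal L$ over $\cF\times\Xi$, whereas the paper directly uses $\sup\mathcal L-\sup\widehat{\mathcal L}\le\sup(\mathcal L-\widehat{\mathcal L})$, which likewise needs no attainment of the population supremum.
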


Proposition~\ref{prop:est-approx-decay} separates the excess error into a statistical term (estimation error) and a representational term (approximation error). The first measures the uniform discrepancy between the empirical and population objectives over the prescribed neural-network classes, while the second quantifies the gap from restricting the optimization to these classes.

The next theorem controls the statistical term. It shows that, for fixed network classes, the estimation error decays at the usual sublinear rate as the number of samples increases. Importantly, decay rates do not depend on the dimension $D$.

\begin{theorem}[Bound on estimation error]\label{thm:estim}
Assume that the classes $\cF$ and $\Xi$ are defined by \eqref{eq:classesFXI}, and that $p_0$ and $p_1$ have compact supports. Then
\begin{equation}
    \begin{aligned}
    \mathbb{E}\sup_{f\in\cF,\ \xi\in\Xi}\left|{\mathcal L}(f,\xi)-\widehat{{\mathcal L}}(f,\xi)\right|\le
    \\
    \le O\left(N^{-\frac12}\right)+O\left(M^{-\frac12}\right) + O\left(K^{-\frac12}\right)
    \end{aligned}
    \end{equation}
where the hidden constants depend on $\varepsilon$, the dimension $D$, the complexity parameters of the classes $\cF$ and $\Xi$ (in particular, the layer-norm bounds, clipping level, width, and depth), and the support diameters of $p_0$ and $p_1$. Expectation is taken w.r.t. random realizations of the dataset.
\end{theorem}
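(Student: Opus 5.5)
The plan is to split the empirical objective $\widehat{\mathcal L}$ in \eqref{eq:empirical-loss} into its three stochastic pieces and compare each to its population counterpart in \eqref{eq:final-loss}; the additive constant cancels. Writing
\[
\Phi_{f,\xi}(x_0)\defeq \Ex{z\sim\mathcal N(0,I)}\exp\Bigl(\tfrac{f(x_0+\sqrt{\varepsilon}z)}{\varepsilon}-\xi(x_0)\Bigr),
\]
the triangle inequality bounds $\sup_{f,\xi}|\mathcal L-\widehat{\mathcal L}|$ by three terms. The first is $\sup_{f\in\cF}|\E_{p_1}f-\frac1M\sum_j f(x_1^j)|$. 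The second is $\varepsilon\sup_{\xi\in\Xi}|\E_{p_0}\xi-\frac1N\sum_i\xi(x_0^i)|$. The third is $\varepsilon\sup_{f,\xi}|\E_{p_0}\Phi_{f,\xi}-\frac{1}{NK}\sum_{i,k}\exp(\cdot)|$. I would split this last term once more by inserting $\frac1N\sum_i\Phi_{f,\xi}(x_0^i)$, which gives an $x_0$-sampling part and a conditional noise-sampling part.

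For the first two terms I would use standard symmetrization, $\E\sup|\cdot|\le 2\,\mathfrak R_M(\cF)$ and $2\,\mathfrak R_N(\Xi)$. The clipping is harmless because clipping is 1-Lipschitz, so Talagrand's contraction applies. The compact supports together with the norm-bounded layers give Rademacher bounds of the form $\mathfrak R_n\le C(\text{depth},\text{width},\text{norm bounds},L_\sigma,\operatorname{diam})/\sqrt n$, either by Golowich--Rakhlin--Shamir-type peeling or by a covering-number/Dudley argument. This yields $O(M^{-1/2})+O(N^{-1/2})$.

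For the $x_0$-part of the third term, clipping makes $|f|\le B_f$ and $|\xi|\le B_\xi$. Hence the map $(a,b)\mapsto\exp(a/\varepsilon-b)$ is Lipschitz on the relevant box, with constant $L\lesssim e^{B_f/\varepsilon+B_\xi}(1/\varepsilon+1)$. For each $f$ I would write $\Phi_{f,\xi}(x_0)=e^{-\xi(x_0)}g_f(x_0)$ with $g_f(x_0)=\E_z e^{f(x_0+\sqrt\varepsilon z)/\varepsilon}$. Here $g_f$ is a Gaussian smoothing of a bounded Lipschitz network and lies in $[e^{-B_f/\varepsilon},e^{B_f/\varepsilon}]$. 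The Rademacher complexity of $\{\Phi_{f,\xi}\}$ on the compact support of $p_0$ is then controlled by a covering argument. The parameter sets are compact, and $\theta\mapsto f_\theta$ and $\eta\mapsto\xi_\eta$ are Lipschitz in sup-norm over the bounded inputs. Noise inputs $x_0+\sqrt\varepsilon z$ are unbounded, but the parameter-Lipschitz constant grows only linearly in $\|x\|$, which is integrable under the Gaussian. This gives a log-covering number of order $P\log(1/\delta)$ with $P$ the parameter count, and Dudley then yields $O(N^{-1/2})$.

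The noise part I expect to be the main obstacle, since the noise samples $z_{i,k}$ enter the network at unbounded points. I would condition on $x_0^{1:N}$ and treat $\frac1{NK}\sum_{i,k}h_{f,\xi}(x_0^i,z_{i,k})$ as an average of $NK$ independent (not identically distributed) bounded variables with range $[e^{-B_f/\varepsilon-B_\xi},e^{B_f/\varepsilon+B_\xi}]$; clipping makes boundedness automatic. Symmetrization over the $z$'s and the same parametric covering argument, now with the empirical $L_2$ metric on the points $x_0^i+\sqrt\varepsilon z_{i,k}$, give a conditional bound $O((NK)^{-1/2}\sqrt{P\log(\cdot)})$. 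The logarithmic factor involves $\max\|z_{i,k}\|$, whose expectation is $O(\sqrt{D\log(NK)})$. If that log factor must be removed to obtain a clean $K^{-1/2}$ rate, I would cover directly in the parameter space. Because the Lipschitz-in-parameter constant has bounded second moment under the Gaussian, this gives $O(K^{-1/2})$, which dominates $(NK)^{-1/2}$. Summing the pieces yields the claimed $O(N^{-1/2})+O(M^{-1/2})+O(K^{-1/2})$. All constants depend on $\varepsilon$, $D$, the clipping level, the norm bounds, width, depth, and the support diameters, and none of the rates depend on $D$.
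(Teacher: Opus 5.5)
Your proof is correct, and your decomposition is the same as the paper's. The paper also splits the error into the $p_1$-term $\Delta_1$, the $\xi$-term $\Delta_2$, the $x_0$-sampling error $\Delta_3$ of $g_{f,\xi}=e^{-\xi}z_f$, and the noise-sampling error $\Delta_4$. It handles $\Delta_1,\Delta_2$ by symmetrization plus Golowich--Rakhlin--Shamir (GRS) norm-based bounds, as you do.

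The genuine difference is the tool you use for the two exponential pieces.

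\begin{itemize}
\item For $\Delta_3$, the paper uses no network structure at all. It writes $z_f(x)=\langle \phi_f,q_x\rangle_{L^2(\gamma_\varepsilon)}$ with $\phi_f=e^{f/\varepsilon}$ and $q_x(y)=\rho_\varepsilon(y-x)/\rho_\varepsilon(y)$. Cauchy--Schwarz and Jensen then give $\mathfrak R_N(\mathcal Z)\le e^{M_\cF/\varepsilon+R_0^2/(2\varepsilon)}/\sqrt N$ from the clipping level alone, and $e^{-\xi}$ is reattached by a two-argument contraction lemma.
\item For $\Delta_4$, the paper fixes each $x_0^i$ and symmetrizes only in $z$. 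It absorbs the unbounded Gaussian inputs through the sample-dependent GRS bound $\frac{A_\cF}{K}(\sum_k\|y_k\|^2)^{1/2}$ and Jensen.
\item You instead run one parametric device everywhere: Lipschitz dependence on a compact $P$-dimensional parameter set, with a parameter-Lipschitz constant linear in $\|x\|$ (hence Gaussian-integrable), followed by Dudley's integral.
\end{itemize}

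Your route buys generality and uniformity:
\begin{itemize}
\item It needs only $L_\sigma$-Lipschitz activations with $\sigma(0)=0$. The GRS peeling bounds are proved for positive-homogeneous activations, which SiLU, used in the experiments, is not.
\item It replaces the paper's $e^{R_0^2/(2\varepsilon)}$ factor by a mild dependence on $R_0$ and $\sqrt{\varepsilon D}$ through the parameter-Lipschitz constant.
\item Pooling all $NK$ noise draws even gives the better rate $(NK)^{-1/2}$ for the noise term.
\end{itemize}
The price is constants growing like $\sqrt P$, i.e.\ with width. By contrast, the paper's $\mathcal Z$-bound is architecture-free, and its GRS terms depend only on norm bounds, depth and data radius. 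The statement permits either.

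Finally, you need not work to remove the logarithm in the noise term. With the empirical $L_2$ metric, the relevant Lipschitz constant is a root-mean-square of $1+\|x_0^i+\sqrt\varepsilon z_{ik}\|$, not a maximum. In any case $\sqrt{\log(NK)/(NK)}\le N^{-1/2}+K^{-1/2}$, so the claimed rates follow either way.
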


\begin{table*}[!t]
\centering
\vspace{5mm}
\resizebox{\textwidth}{!}{%
\begin{tabular}{ |c|c|c|c|c|c| }
\hline
\textbf{Setup} & \textbf{Solver type} & \backslashbox{\textbf{Solver}}{\textbf{DIM}} & \textbf{50} & \textbf{100} & \textbf{1000} \\
\hline
 Discrete EOT & Sinkhorn & \citep{cuturi2013sinkhorn} [1 GPU V100]* & $2.34$ ($90$ s) & $2.24$ ($2.5$ m) & $1.864$ ($9$ m) \\
\hline
 Continuous EOT & Langevin-based & \citep{mokrov2024energyguided} [1 GPU V100]* & $2.39 \pm 0.06$ ($19$ m) & $2.32 \pm 0.15$ ($19$ m) & $1.46 \pm 0.20$ ($15$ m) \\
\hline
 Continuous EOT & Minimax & \citep{gushchin2023entropic} [1 GPU V100]* & $2.44 \pm 0.13$ ($43$ m) & $2.24 \pm 0.13$ ($45$ m) & $1.32 \pm 0.06$ ($71$ m) \\
\hline
 Continuous EOT & IPF & \citep{vargas2021solving} [1 GPU V100]* & $3.14 \pm 0.27$ ($8$ m) & $2.86 \pm 0.26$ ($8$ m) & $2.05 \pm 0.19$ ($11$ m) \\
\hline
 Continuous EOT & Variational & \textbf{VarEOT (ours)} [GTX 1080] & $\mathbf{2.45 \pm 0.08}$ ($3$ m) & $\mathbf{2.31 \pm 0.15}$ ($3$ m) & $\mathbf{1.47 \pm 0.11}$ ($14.8$ m) \\
\hline
\end{tabular}%
}
\vspace{2mm}
\captionsetup{font=small, justification=centering}
\caption{\color{black} Energy distance (averaged for two setups and 5 random seeds) on the MSCI dataset along with $95\%$-confidence interval ($\pm$ intervals) and average training times (s~--- seconds, m~--- minutes). Results marked with * are taken from \citep{korotin2024light}.}
\label{table-sc-comparison}
\vspace{-3mm}
\end{table*}

We next turn to the approximation term. The following statement asserts that it can be made arbitrarily small by choosing sufficiently expressive neural-network classes.

\begin{theorem}[Vanishing of approximation error]\label{thm:approx} Let $p_0$ and $p_1$ have compact supports, then for any $\delta>0$, there exist classes $\cF$ and $\Xi$ of the form \eqref{eq:classesFXI} such that
\begin{equation}
\mathcal L^\ast-\sup_{f\in\cF,\ \xi\in\Xi}\mathcal L(f,\xi)
<
\delta.
\end{equation}
\end{theorem}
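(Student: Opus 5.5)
We will prove Theorem~\ref{thm:approx} in two stages. First we show that $\mathcal L^*$ can be approached by bounded continuous potentials. Then we invoke the universal approximation theorem for networks with non-polynomial activation on compact sets.

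\textbf{Step 1: reduce to a nice pair.} By Theorem~\ref{prop:euclidean-dual}, $\mathcal L^*=\sup_f\mathcal L(f)$, and by Proposition~\ref{lem:variational-Z} the inner supremum over $\xi$ is attained at $\xi_f$. Fix $\delta>0$. We will find a potential $f_\delta$ that is continuous and bounded on a compact set and satisfies $\mathcal L^*-\mathcal L(f_\delta)<\delta/3$. To build it, we truncate $f^*$ from above and below, $f_c=\max(\min(f^*,c),-c)$, and then mollify or apply Lusin's theorem to make it continuous.

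There is one subtlety. The objective involves $f$ at the points $x_0+\sqrt\varepsilon z$, which range over all of $\R^D$, so $f$ cannot be set to $-\infty$ off $\mathrm{supp}(p_1)$. A clipped network, however, has $f\ge -c$ everywhere. We therefore use the monotone behaviour of the objective as $c\to\infty$. On the term $\E_{p_1} f$, dominated convergence applies because $f^*\in L_1(p_1)$. On $\log Z(f_c,x_0)$, the clipping at $-c$ adds at most $e^{-c/\varepsilon}(2\pi\varepsilon)^{D/2}$ inside the partition function. Clipping $f^*$ from above decreases $Z$ monotonically, and $Z$ stays bounded below away from $0$ because $\mathrm{supp}(p_1)$ has positive mass and the Gaussian factor is bounded below on compacts. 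These facts give $\mathcal L(f_c)\to\mathcal L(f^*)$.

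After that, we set $\xi_\delta:=\xi_{f_\delta}$. This function is continuous in $x_0$, by dominated convergence, and bounded on the compact set $\mathrm{supp}(p_0)$.

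\textbf{Step 2: approximate by networks.} Choose a large ball $B$ containing $\mathrm{supp}(p_0)$, $\mathrm{supp}(p_1)$, and most of the Gaussian mass of $x_0+\sqrt\varepsilon z$ for $x_0\in\mathrm{supp}(p_0)$. By universal approximation, there are networks $f_n$ and $\xi_n$ with $\sup_B|f_n-f_\delta|\to0$ and $\sup_{\mathrm{supp}(p_0)}|\xi_n-\xi_\delta|\to0$. The clipping level is chosen to exceed $\|f_\delta\|_\infty$ and $\|\xi_\delta\|_\infty$, so clipping only helps: it maps approximants closer to the target, and the clipped outputs are globally bounded. The networks have finitely many parameters, so finite layer-norm bounds exist, and the resulting $\cF,\Xi$ are of the form \eqref{eq:classesFXI}.

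We then bound $|\mathcal L(f_n,\xi_n)-\mathcal L(f_\delta,\xi_\delta)|$ term by term. The linear terms are controlled by the sup-norm errors. For the exponential term, $\exp$ is Lipschitz on bounded ranges, and all arguments lie in a fixed interval set by the clipping. The Gaussian tail outside $B$ contributes at most $2e^{2c/\varepsilon}\,\mathbb P(x_0+\sqrt\varepsilon z\notin B)$, which is made smaller than $\delta/3$ by enlarging $B$ after $c$ is fixed. Combining, we get $\mathcal L^*-\mathcal L(f_n,\xi_n)<\delta$.

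\textbf{Main obstacle.} We expect Step 1 to be the hardest part, namely justifying that $\mathcal L(f_c)\to\mathcal L^*$ when $f^*$ may be unbounded or equal to $-\infty$ off the support and may lack regularity. If the regularity of $f^*$ is problematic, we would instead start from an arbitrary $f$ with $\mathcal L(f)>\mathcal L^*-\delta/6$ taken from the definition of the supremum, chosen in $L_1(p_1)$. We would then approximate it by a continuous compactly supported function plus a constant $-c$ outside, controlling $\log Z$ through Fatou's lemma and dominated convergence.
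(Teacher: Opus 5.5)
Your proposal is correct and follows essentially the paper's route. You reduce to a bounded continuous potential $f$ by truncation followed by continuous approximation, where the lower clip costs only an $e^{-c/\varepsilon}(2\pi\varepsilon)^{D/2}$ term in $Z_f$, and take $\xi=\xi_f$. You then approximate both by clipped one-hidden-layer networks on a ball chosen after the clipping level is fixed, so that the Gaussian tail is negligible, and pick classes containing these networks. Your smoothing step (mollify or Lusin) is as terse as the paper's reduction to continuous potentials. It goes through because $p_0,p_1$ are absolutely continuous, so mollifiers converge a.e. and hence in $L_1(p_1)$ and in $Z_f(x_0)$ by dominated convergence, and because $\log$ is Lipschitz on $\{Z\ge e^{-c/\varepsilon}(2\pi\varepsilon)^{D/2}\}$.
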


Thus, the bias introduced by the neural-network parameterization can be driven arbitrarily close to zero by enlarging the expressive power of the admissible classes.

\textbf{Summary.}
Proposition~\ref{prop:est-approx-decay}, together with Theorems~\ref{thm:estim} and~\ref{thm:approx}, yields the standard learnability picture for EOT plan recovery. For fixed neural-network classes, the estimation gap vanishes as the sample sizes grow. Meanwhile, by increasing the expressiveness of the classes, the approximation error can be made arbitrarily small. Consequently, with sufficiently many samples and sufficiently expressive neural-network parameterizations, the learned plan $\pi^{\hat f}$ approaches the EOT solution in excess $\mathrm{KL}$ discrepancy.

\paragraph{Relation to prior work.}
Related error decompositions appear in \cite{mokrov2024energyguided,Kolesov2024EnergyGuidedBarycenter}. 
While \cite{mokrov2024energyguided} does not provide a fully detailed breakdown of the individual error terms for EgNOT, \cite{Kolesov2024EnergyGuidedBarycenter} develops a finer-grained statistical analysis but for a different objective (entropic barycenters). 
We follow a similar template, yet provide a detailed analysis for our more elaborate variationally derived VarEOT functional.

\vspace{3mm}
\section{Related Work}
In this section, we review the existing continuous EOT approaches, and briefly compare them with our solver. For clarity, we introduce a taxonomy of the existing approaches below, based on the utilized mathematical framework.

\textbf{Weak dual EOT} (a.k.a. \textit{semi}-dual EOT) methods \citep{mokrov2024energyguided, korotin2024light} optimize the objective eq. \eqref{eq:dual-final}. They are the most similar to our VarEOT in terms of methodology. The detailed discussion of the approaches is given in \wasyparagraph\ref{sec:existing-solvers}. Crucially, our proposed method successfully overcomes the limitations of existing weak dual approaches while retaining their advantages.

\textbf{Dual EOT} \citep{genevay2016stochastic, seguy2018large, daniels2021score} exploit an alternative dual form of the EOT. These methods simultaneously optimize a \textit{pair} of dual potentials $(u, v)$ in a $\max \max$ optimization procedure known as the Sinkhorn algorithm. While bearing certain resemblance to our VarEOT (e.g., simulation-free training), dual EOT approaches have pitfalls underscoring their performance:
\begin{itemize}[leftmargin=*]
    \item Dual potentials $(u, v)$ do not fully recover the (conditional) EOT  $\pi^*(\cdot \vert x_0)$. In particular, an \textit{auxiliary} score-based model for the target distribution $p_1$ is required to enable sampling from $\pi^*(\cdot \vert x_0)$ \citep{daniels2021score}. Otherwise, there are only some heuristics (e.g., barycentric projection) that allow the recovered solution to be cast as a \textit{generative} model capable of data-to-data translation \citep{genevay2016stochastic, seguy2018large}.
    \item Due to optimization peculiarities, dual EOT approaches tend to be unstable under small regularization coefficient $\varepsilon$. This behavior is reported in \citep[\wasyparagraph 5.1]{daniels2021score} and supported by \citep[\wasyparagraph 4.2, \wasyparagraph C.2]{mokrov2024energyguided}.
\end{itemize}
Overall, VarEOT provides a more user-friendly and production-ready framework with less engineering overhead required to adapt the method to applications.

\textbf{Schr\"odinger bridge (SB)} methods \citep{de2021diffusion, vargas2021solving, chen2022likelihood, gushchin2023entropic, shi2023diffusion,tong2024simulation,gushchin2024adversarial,de2024schrodinger,gushchin2024light} cast the EOT in a dynamic manner and recover a solution in the form of a stochastic differential equation with learnable drift, with the endpoints given by source $p_0$ and target $p_1$ distributions. The underlying principles of SB-based approaches are different, e.g., adversarial optimization, gradual iterative markovian/proportional fitting procedure. However, compared to our method, \textbf{majority} of them are not simulation-free at training. The exceptions are \citep{tong2024simulation} (based on mini-batch EOT approximation, which may not be accurate) and \citep{gushchin2024light} (similar to \citep{korotin2024light} uses restricted Gaussian mixture approximation).

\vspace{3mm}
\section{Experimental Illustrations}
\label{sec-experiments}
\vspace{-1mm}

We evaluate VarEOT on both synthetic and real-world data.
\wasyparagraph\ref{sec-exp-toy} presents illustrative two-dimensional experiments
demonstrating the effect of the regularization parameter $\varepsilon$ on the
learned transport plan. \wasyparagraph\ref{sec-exp-new-single-cell} benchmarks VarEOT on high-dimensional biological single-cell data from the MSCI dataset, comparing against EOT/SB solvers. \wasyparagraph\ref{sec-exp-alae} comprehensively evaluates on unpaired image-to-image translation using the standard ALAE
latent space protocol, comparing VarEOT against related entropic transport
baselines across multiple translation tasks and regularization regimes.
\underline{Technical details}, including architecture specifications and hyperparameters, are provided in Appendix~\ref{sec-exp-details}.

\vspace{-2mm}
\subsection{Two-dimensional Examples}
\label{sec-exp-toy}
\vspace{-1mm}

We begin with a standard illustrative two-dimensional experiment that provides intuition into the role of the entropy regularization parameter $\varepsilon$ in shaping the
learned conditional transport plan $\pi_{\theta}(x_1 \mid x_0)$. Specifically,
we consider a setting in which a Gaussian source distribution is transported to
a Swiss Roll target distribution. We apply our method for
$\varepsilon \in \{10^{-2},\,10^{-1},\,10^{0}\}$, and visualize the resulting
transport maps in Fig.~\ref{fig:swiss-roll}.  

For small values of $\varepsilon$, the learned transport is nearly deterministic,
with little variation in sampling endpoints. As $\varepsilon$ increases, the transport becomes progressively more stochastic: the endpoints diversify, and the conditional
distributions $\pi_{\theta}(x_1 \mid x_0)$ spread over a wider region of the
target space.

\begin{figure*}[t]
    \centering
    \includegraphics[width=1.0\textwidth]{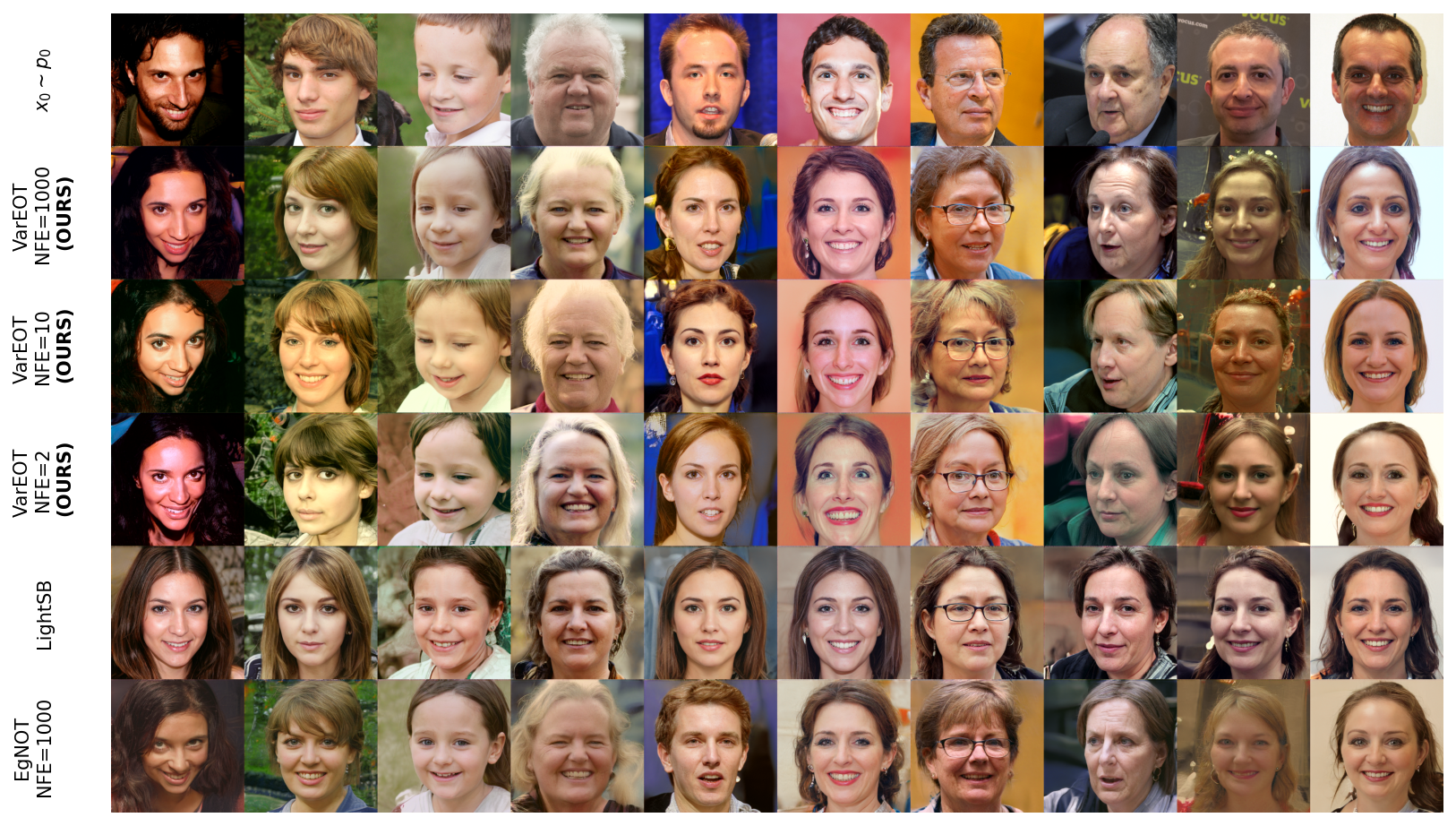}
    \caption{\centering
    Qualitative comparison for \textit{Man} $\rightarrow$ \textit{Woman} translation with $\varepsilon=1.0$.
    From top to bottom: input samples, VarEOT (ours), LightSB, and EgNOT.
    Input images are selected from the test set: we take the first 300 samples and rank them by encoder-decoder reconstruction quality (LPIPS), displaying the top-ranked examples.}
    \label{fig:male2female}
    \vspace{-3mm}
\end{figure*}

\vspace{-2mm}
\subsection{Single Cell Data}
\label{sec-exp-new-single-cell}
\vspace{-1mm}

One of the important applications of EOT is the analysis of biological single-cell data \citep{koshizuka2023neural, bunne2023learning, bunne2022proximal}. We evaluate VarEOT on the high-dimensional \underline{MSCI dataset} (Kaggle competition ``Open Problems~--- Multimodal Single-Cell Integration'') first used in \citep{tong2024simulation}. The dataset consists of single-cell data from four human donors at 4 time points (days $2$, $3$, $4$, and $7$). We solve the EOT problem between distribution pairs at days $2\to4$ and $3\to7$, and evaluate how well the solvers recover the intermediate distributions at days $3$ and $4$ respectively. We consider PCA projections with $\text{DIM}=50, 100, 1000$ components and report energy distance \citep{rizzo2016energy} (ED) in Table~\ref{table-sc-comparison}. For VarEOT, the experimental setup (architecture, number of iterations, $\varepsilon$, and learning rate) follows the EgNOT configuration from \citep{korotin2024light} with $K{=}128$ particles; we refer the reader to \underline{Appendix~D.4} of \citep{korotin2024light} for full details. VarEOT achieves quality comparable to Langevin-based and Minimax solvers for $\text{DIM}=50$ and $100$, while being $5$--$15\times$ faster ($3$ m vs.\ $19$--$45$ m) on less powerful hardware (GTX~1080 vs.\ V100). For $\text{DIM}=1000$, VarEOT matches the quality of the Langevin-based solver ($1.47$ vs.\ $1.46$) with comparable training time ($14.8$ m vs.\ $15$ m). The \underline{details} of preprocessing, hyperparameters and baselines are in Appendix~\ref{sec-exp-details}.

\subsection{Unpaired Image-to-Image Translation}
\label{sec-exp-alae}

Unpaired image-to-image translation is a standard benchmark in the
entropic optimal transport (EOT) and Schrödinger Bridge (SB) literature
\citep{zhu2017unpaired,daniels2021score,chen2021stochastic}, as it
naturally combines high-dimensional data and the need for stochastic transport plans. In particular, translation in the latent
space of pretrained autoencoders has become a popular and convenient
setup for comparing entropic transport methods \cite{korotin2024light, mokrov2024energyguided, kornilov2024optimal}.

We follow the widely adopted ALAE protocol \citep{korotin2024light, theodoropoulos2024feedback, gushchin2024light, kornilov2024optimal, gazdieva2024light, han2025variational} based on ALAE autoencoder \citep{pidhorskyi2020adversarial}. Specifically, we use a pretrained ALAE
autoencoder trained on the full $1024 \times 1024$ FFHQ dataset
\citep{karras2019style}, which contains approximately $70$K human face
images. The first $60$K images are used for training and are split into
(\textit{male}, \textit{female}) and (\textit{child}, \textit{adult}) subsets. Using the fixed ALAE encoder,
we extract $512$-dimensional latent representations
$\{z_0^n=\mathrm{Enc}(x_0^n) \}_{n=1}^N$ and $\{z_1^m=\mathrm{Enc}(x_1^m)\}_{m=1}^M$ corresponding to the source
and target domains. We consider $4$ setups: male to female (M$\to$F), female to male (F$\to$M), adult to child (A$\to$C), and child to adult (C$\to$A).

\vspace{-1mm}
\textbf{Training.}
Given unpaired samples from the two latent distributions, we learn a
latent entropic optimal transport plan
$\pi_\theta(z_1 \mid z_0)$ using our variational formulation. The model
is trained entirely in latent space and does not require paired data or
image-level supervision.

\vspace{-1mm}
\textbf{Inference.}
To translate a previously unseen \textit{image} $x_0^{\text{new}}$
(from the remaining $10$K test images), we
\textbf{(i)} encode it as $z_0^{\text{new}} = \mathrm{Enc}(x_0^{\text{new}})$,
\textbf{(ii)} sample $z_1 \sim \pi_\theta(z_1 \mid z_0^{\text{new}})$ using
Langevin dynamics, and \textbf{(iii)} decode 
$x_1 = \mathrm{Dec}(z_1)$.

\textbf{Evaluation Protocol.}
We evaluate the quality of generated samples using two metrics: Fréchet Inception Distance (FID) \citep{heusel2017gans} and Learned Perceptual Image Patch Similarity (LPIPS) \citep{zhang2018unreasonable}. FID is computed between the set of test translated images $\{x_1^n = \mathrm{Dec}(z_1^n)\}_{n=1}^{N}$, where $z_1^n \sim \pi_\theta(z_1 \mid z_0^n)$, and the set of ALAE-reconstructed target images $\{\tilde{x}_1^m = \mathrm{Dec}(\mathrm{Enc}(x_1^m))\}_{m=1}^{M}$. LPIPS is measured between the source $x_0$ and the translated $x_1 = \mathrm{Dec}(z_1)$, where $z_1 \sim \pi_\theta(z_1 \mid \mathrm{Enc}(x_0))$. It measures the input-output similarity.

\begin{table*}[!t]
\scriptsize
\setlength{\tabcolsep}{5pt}
\begin{tabular}{cc|
cccccc | cccccc}
\toprule
& &
\multicolumn{6}{c}{FID $\downarrow$} &
\multicolumn{6}{c}{LPIPS $\downarrow$} \\
\cmidrule(lr){3-8} \cmidrule(lr){9-14}
$\varepsilon$ & Task
& LightSB
& \multicolumn{2}{c}{EgNOT}
& \multicolumn{3}{c}{VarEOT \textbf{(ours)}}
& LightSB
& \multicolumn{2}{c}{EgNOT}
& \multicolumn{3}{c}{VarEOT \textbf{(ours)}} \\
\cmidrule(lr){4-5}\cmidrule(lr){6-8}\cmidrule(lr){10-11} \cmidrule(lr){12-14}
& &
&
NFE=10 & NFE=1000 &
NFE=2 & NFE=10 & NFE=1000
& &
NFE=10 & NFE=1000
& NFE=2 & NFE=10 & NFE=1000 \\
\midrule
\multirow{4}{*}{$0.1$}
& M$\to$F & 8.77 & 13.4 & \textbf{7.63} & 181.44 & 11.70 & 12.74 & 0.582 & 0.6 & 0.585 & 0.872 & 0.580 & \textbf{0.570} \\
& F$\to$M & 10.87 & 14.9 & \textbf{6.33} & 248.76 & 17.00 & 13.98 & 0.597 & 0.603 & 0.598 & 0.833 & 0.586 & \textbf{0.578} \\
& A$\to$C & 15.42 & 40.28 & \textbf{10.52} & 247.81 & 50.35 & 17.33 & 0.577 & 0.586 & 0.588 & 0.839 & \textbf{0.545} & 0.577 \\
& C$\to$A & 13.46 & 18.47 & \textbf{10.4} & 253.59 & 21.48 & 20.13 & 0.594 & 0.602 & 0.608 & 0.865 & 0.591 & \textbf{0.590} \\
\midrule
\multirow{4}{*}{$0.5$}
& M$\to$F & 19.09 & 35.4 & 14.07 & \textbf{9.04} & 9.50 & 9.77 & 0.611 & 0.718 & 0.594 & 0.619 & 0.600 & \textbf{0.598} \\
& F$\to$M & 25.81 & 51.73 & 11.66 & 20.01 & 10.02 & \textbf{9.94} & 0.628 & 0.695 & 0.613 & \textbf{0.609} & 0.619 & 0.616 \\
& A$\to$C & 22.30 & 74.25 & \textbf{18} & 78.85 & 26.38 & 25.99 & 0.608 & 0.702 & 0.603 & \textbf{0.593} & 0.617 & 0.613 \\
& C$\to$A & 22.70 & 40.38 & 16.95 & 17.83 & \textbf{16.55} & 16.74 & \textbf{0.614} & 0.7 & 0.629 & 0.632 & 0.616 & \textbf{0.614} \\
\midrule
\multirow{4}{*}{$1.0$}
& M$\to$F & 22.63 & 39.41 & 17.58 & 10.38 & \textbf{9.59} & 9.71 & 0.637 & 0.749 & 0.634 & 0.633 & 0.613 & \textbf{0.610} \\
& F$\to$M & 20.97 & 66.64 & 26.21 & 15.21 & 10.52 & \textbf{10.28} & 0.649 & 0.72 & 0.634 & 0.654 & 0.633 & \textbf{0.630} \\
& A$\to$C & 24.43 & 60.57 & 28.45 & 53.39 & \textbf{16.60} & 16.83 & 0.634 & 0.724 & 0.623 & 0.700 & 0.610 & \textbf{0.608} \\
& C$\to$A & 24.87 & 51.48 & 23.68 & \textbf{15.73} & 16.55 & 16.70 & 0.637 & 0.733 & 0.661 & 0.638 & 0.623 & \textbf{0.619} \\
\midrule
\multirow{4}{*}{$10.0$}
& M$\to$F
& 31.85 & 37.61 & 26.68 & \textbf{21.84} & 21.97 & 22.38 & 0.680 & 0.755 & 0.735 & 0.625 & 0.621 & \textbf{0.620} \\
& F$\to$M & 34.47 & 63.73 & 36 & \textbf{25.55} & 28.84 & 29.24 & 0.693 & 0.758 & 0.734 & 0.637 & 0.632 & \textbf{0.630} \\
& A$\to$C & \textbf{31.85} & 92.6 & 64.85 & 49.84 & 50.72 & 51.20 & 0.680 & 0.767 & 0.763 & 0.619 & \textbf{0.615} & \textbf{0.615} \\
& C$\to$A & 35.35 & 51.74 & 35.53 & \textbf{31.01} & 32.84 & 34.07 & 0.683 & 0.748 & 0.741 & 0.639 & 0.636 & \textbf{0.633} \\
\bottomrule
\end{tabular}
\vspace{2mm}
\caption{\centering
Quantitative comparison of unpaired image-to-image translation methods
in the ALAE latent space. We report FID and LPIPS (lower is better) for
four translation tasks (M$\to$F, F$\to$M, A$\to$C, C$\to$A) across different values of
the entropic regularization parameter $\varepsilon$.
VarEOT results are shown for different numbers of Langevin inference
steps (NFE $=2, 10, 1000$).
For each row, the best-performing method is highlighted in
\textbf{bold}.
}
\vspace{0mm}
\label{tab:fid-lpips}
\end{table*}

\textbf{Results.}
We comprehensively evaluate VarEOT across all four translation setups:
M$\to$F, F$\to$M, A$\to$C, and C$\to$A. Figure~\ref{fig:male2female}
shows qualitative results for the M$\to$F setup with $\varepsilon = 1.0$.
We report translations obtained using different numbers of Langevin steps
at inference time (NFE $= 2, 10, 1000$), demonstrating that VarEOT
produces visually plausible and stable results even with a very small
number of sampling steps. \underline{Additional results} for alternative
translation directions (F$\to$M, A$\to$C, C$\to$A) are provided in
Appendix~\ref{sec-exp-additional}
(Figures~\ref{fig:W2M},~\ref{fig:O2Y}, and~\ref{fig:Y2O}), confirming
consistent behavior across all setups. Quantitative evaluation using FID and LPIPS
metrics is summarized in Table~\ref{tab:fid-lpips},
covering a wide range of regularization parameters
$\varepsilon \in \{0.1, 0.5, 1.0, 10.0\}$ and all four translation
directions. We compare VarEOT against closely related entropic transport
baselines: LightSB and EgNOT, showing competitive or improved performance
across all considered regimes.

A qualitative analysis of the effect of the
regularization parameter $\varepsilon$ on sample diversity is provided in Figure~\ref{fig:eps-dependence}, illustrating how varying $\varepsilon$
controls the trade-off between transport cost and output stochasticity.
Finally, Figure~\ref{fig:fid-heatmaps} provides an in-depth study of FID
performance as a function of the Langevin step size and the number of
inference steps.

We additionally conduct a wall-clock time analysis as a function of the number of particles $K$. Table~\ref{tab:wall-clock} shows that VarEOT training time scales with $K$: larger $K$ leads to longer training, but also to better FID. Importantly, VarEOT achieves superior FID compared to both LightSB and EgNOT across all considered values of $K$, while requiring significantly less training time than EgNOT (e.g., $131$s vs.\ $1692$s for better quality). We attribute this to our simulation-free training objective, which avoids costly trajectory simulation required by EgNOT.

\vspace{3mm}
\section{Discussion}
\textbf{Potential impact.}
By introducing an exact variational reformulation of the weak dual EOT
objective, VarEOT makes a step toward more efficient entropic transport
algorithms that avoid the key limitations of existing methods, such as
simulation-based training, adversarial objectives, and restrictive
parametric assumptions.

\textbf{Limitations.}
While VarEOT enables simulation-free training, several limitations remain. First, sampling at inference time via Langevin dynamics is still required, making generation quality dependent on step size and step count. Second, the exponential terms in the objective may cause numerical instability during training. Third, the quality of optimization is inherently tied to the expressiveness of the auxiliary network $\xi$, and the choice of its architecture remains a practical question; we note that in our experiments a standard MLP for $\xi$ proved sufficient. Finally, the current framework is specialized to the quadratic cost (Gaussian kernel); extending to general costs is a natural direction for future work.

\section*{Acknowledgements}
The work was supported by the grant for research centers in the field of
AI provided by the Ministry of Economic Development of the Russian
Federation in accordance with the agreement 000000C313925P4F0002 and the
agreement №139-10-2025-033.

\section*{Impact Statement}
This paper presents work whose goal is to advance the field of Machine
Learning. There are many potential societal consequences of our work, none
which we feel must be specifically highlighted here.
\clearpage

\nocite{langley00}

\bibliography{example_paper}
\bibliographystyle{icml2026}

\newpage
\appendix
\onecolumn

\section{Proofs}\label{app:Proofs}

 Throughout the appendix, we will use the shorthand $Z_f := Z(f,\cdot)$.

\begin{definition}\label{def:KLdiv}
Let $\mu$ and $\nu$ be non-negative measures on $\mathbb{R}^d$, and $\mu$ absolutely continuous w.r.t. $\nu$. The Kullback-Leibler divergence is defined by 
$$
\mathrm{KL}(\mu\| \nu) := \int \left[\frac{d\mu}{d\nu}\log\left(\frac{d\mu}{d\nu}\right) - \frac{d\mu}{d\nu}+1\right]\; d\nu.
$$
    
\end{definition}

\begin{definition}[Empirical Rademacher complexity]
Let $S=(x_1,\dots,x_N)\in X^N$ be a fixed sample, and let $\mathcal H$ be a class of
measurable functions $h:X\to\mathbb R$. Its empirical Rademacher complexity on $S$
is defined by
\[
\widehat{\mathfrak R}_S(\mathcal H)
:=
\frac1N\,\mathbb E_\sigma\left[\sup_{h\in\mathcal H}\sum_{i=1}^N \sigma_i h(x_i)\right],
\]
where $\sigma_1,\dots,\sigma_N$ are independent Rademacher random variables.
\end{definition}

\begin{definition}[Rademacher complexity]
Let $p$ be a probability measure on $X\subseteq\R^D$, and let $\mathcal H$ be a class of measurable
functions $h:X\to\mathbb R$. Its Rademacher complexity is
\[
\mathfrak R_N(\mathcal H,p)
:=
\mathbb E_{S\sim p^N}\,\widehat{\mathfrak R}_S(\mathcal H)
=
\frac1N\,\mathbb E_{x,\sigma}\left[\sup_{h\in\mathcal H}\sum_{i=1}^N \sigma_i h(x_i)\right],
\]
where $x_1,\dots,x_N$ are i.i.d.\ samples from $p$.
\end{definition}

The following symmetrization bound is standard; see, e.g.,
\citep[Lemma~26.2]{shalev2014understanding}.

\begin{lemma}[Representativeness estimation]\label{lem:sym}
Let $\mathcal H$ be a class of measurable functions $h:X\to\mathbb R$ such that
$\mathcal H\subset L^1(p)$. Then for i.i.d.\ samples $x_1,\dots,x_N\sim p$,
\[
\mathbb E\sup_{h\in\mathcal H}\left|
\mathbb E_p[h]-\frac1N\sum_{i=1}^N h(x_i)
\right|
\le 2\,\mathfrak R_N(\mathcal H,p).
\]
\end{lemma}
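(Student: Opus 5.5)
The plan is the classical ghost-sample symmetrization argument, as in \citep[Lemma~26.2]{shalev2014understanding}. Throughout I write $\widehat{\mathbb E}_S[h]:=\frac1N\sum_{i=1}^N h(x_i)$. I also introduce an independent ghost sample $S'=(x_1',\dots,x_N')\sim p^N$, so that $\mathbb E_p[h]=\mathbb E_{S'}\widehat{\mathbb E}_{S'}[h]$ for every $h\in\mathcal H\subset L^1(p)$; integrability is exactly what is needed for this.

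\textbf{Step 1 (ghost sample).} For each fixed $S$, I pull the supremum inside the inner expectation using Jensen's inequality, since $\sup|\cdot|$ is convex:
\[
\sup_{h}\bigl|\mathbb E_p h-\widehat{\mathbb E}_S h\bigr|
=\sup_h\bigl|\mathbb E_{S'}[\widehat{\mathbb E}_{S'}h-\widehat{\mathbb E}_S h]\bigr|
\le \mathbb E_{S'}\sup_h\Bigl|\tfrac1N\textstyle\sum_i (h(x_i')-h(x_i))\Bigr|.
\]
Taking $\mathbb E_S$ of both sides gives the first bound.

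\textbf{Step 2 (sign flips).} The pairs $(x_i,x_i')$ are i.i.d. and exchangeable within each pair. Hence the joint law of $\bigl(h(x_i')-h(x_i)\bigr)_{i,h}$ is unchanged under multiplication by any fixed sign vector $\sigma\in\{\pm1\}^N$. Averaging over Rademacher $\sigma$ therefore gives
\[
\mathbb E_{S,S'}\sup_h\Bigl|\tfrac1N\textstyle\sum_i(h(x_i')-h(x_i))\Bigr|
=\mathbb E_{S,S',\sigma}\sup_h\Bigl|\tfrac1N\textstyle\sum_i\sigma_i(h(x_i')-h(x_i))\Bigr|.
\]

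\textbf{Step 3 (split).} I apply the triangle inequality inside the supremum. Then I use that $-\sigma$ has the same law as $\sigma$, and that $S'$ has the same law as $S$. This bounds the last quantity by
\[
2\,\mathbb E_{S,\sigma}\sup_h\Bigl|\tfrac1N\textstyle\sum_i\sigma_i h(x_i)\Bigr|.
\]

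\textbf{Main obstacle.} The last display is the Rademacher complexity with an absolute value inside the supremum. The paper's definition of $\mathfrak R_N(\mathcal H,p)$ has no absolute value, so Step 3 matches it only when $\mathcal H=-\mathcal H$, or more generally when $\sup_h|\cdot|=\sup_h(\cdot)$. Without the absolute value, running Steps 1--3 gives each one-sided bound exactly:
\[
\mathbb E\sup_h\bigl(\mathbb E_p h-\widehat{\mathbb E}_S h\bigr)\le 2\,\mathfrak R_N(\mathcal H,p),
\qquad
\mathbb E\sup_h\bigl(\widehat{\mathbb E}_S h-\mathbb E_p h\bigr)\le 2\,\mathfrak R_N(\mathcal H,p).
\]
The first holds by $\sup(a+b)\le\sup a+\sup b$ together with $-\sigma\overset{d}{=}\sigma$. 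The second holds by the same argument with the roles of $S$ and $S'$ exchanged.

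Combining the two directions is the delicate point. Writing the maximum of two suprema as a single supremum, $\sup_h|\cdot|=\sup_{h\in\mathcal H\cup(-\mathcal H)}(\cdot)$, and the constant stays $2$ only if the complexity is taken over $\mathcal H\cup(-\mathcal H)$. Crude alternatives, such as bounding the maximum by the sum of the two one-sided terms, only give $4$.

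I would flag explicitly that for a literally arbitrary $\mathcal H$ and the unsigned definition, the constant-$2$ bound cannot hold. Take a singleton $\mathcal H=\{h\}$ with $h$ non-constant $p$-a.s. Then $\mathfrak R_N(\mathcal H,p)=0$, because $\mathbb E\sigma_i=0$. Yet $\mathbb E|\mathbb E_p h-\widehat{\mathbb E}_S h|>0$.

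I would therefore prove the lemma as Steps 1--3, with $\mathfrak R_N$ read as the absolute-value, equivalently $\mathcal H\cup(-\mathcal H)$, version. This is the form the paper later needs for neural classes, where the Lipschitz and contraction bounds control the absolute-value complexity anyway.
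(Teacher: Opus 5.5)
Your Steps~1--3 are the same ghost-sample symmetrization the paper relies on; the paper gives no proof and simply cites Lemma~26.2 of Shalev-Shwartz and Ben-David. Your diagnosis is correct. That lemma controls only the one-sided quantity $\mathbb E\sup_h(\mathbb E_p h-\widehat{\mathbb E}_S h)$, so with the absolute value and the paper's unsigned $\mathfrak R_N$ the statement is false as written, and your singleton example shows this. Proving it with the absolute-value (equivalently $\mathcal H\cup(-\mathcal H)$) complexity is the right fix, and your argument for that version is complete.

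One aside is wrong, however. Write $u_h:=\mathbb E_p h-\widehat{\mathbb E}_S h$. You cannot bound $\sup_h|u_h|=\max\{\sup_h u_h,\sup_h(-u_h)\}$ by $\sup_h u_h+\sup_h(-u_h)$, because these suprema can be negative. In your singleton example their sum is identically $0$, so that route gives no constant at all, not $4$. The repair is to recenter at a fixed bounded $h_0\in\mathcal H$. Then $\sup_h(u_h-u_{h_0})$ and $\sup_h(u_{h_0}-u_h)$ are both nonnegative, and
\[
\mathbb E\sup_{h\in\mathcal H}|u_h|\le \mathbb E\sup_h u_h+\mathbb E\sup_h(-u_h)+\mathbb E|u_{h_0}|\le 4\,\mathfrak R_N(\mathcal H,p)+\frac{\sup_x|h_0(x)|}{\sqrt N}.
\]

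This recentered form is also what the paper's later use actually needs, contrary to your closing remark. For the class $\mathcal G$ of positive functions, which is not closed under negation, the bound on $\mathfrak R_{N_0}(\mathcal G,p_0)$ comes from an unsigned contraction inequality. Since $t\mapsto e^{-t}$ does not vanish at $0$, that bound does not control the absolute-value complexity directly. Every class in Theorem~\ref{thm:estim} is uniformly bounded, so the extra term is $O(N^{-1/2})$ and the stated rates are unaffected.
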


We will use the vector contraction inequality of \citep[Corollary~4]{maurer2016vector}.

\begin{lemma}[Vector contraction]\label{lem:vector-contraction}
Let $S=(x_1,\dots,x_N)$ be a fixed sample, let $\mathcal F$ be a class of functions
$f=(f_1,\dots,f_K):X\to\mathbb R^K$, and let $h_i:\mathbb R^K\to\mathbb R$ be
$L$-Lipschitz with respect to the Euclidean norm. Then
\[
\mathbb E_\sigma\left[
\sup_{f\in\mathcal F}\sum_{i=1}^N \sigma_i\, h_i(f(x_i))
\right]
\le
\sqrt{2}\,L\,
\mathbb E_{\sigma_{ik}}\left[
\sup_{f\in\mathcal F}\sum_{i=1}^N\sum_{k=1}^K \sigma_{ik} f_k(x_i)
\right],
\]
where $\{\sigma_{ik}\}_{1\le i\le N,\ 1\le k\le K}$ is an independent doubly indexed
Rademacher family.
\end{lemma}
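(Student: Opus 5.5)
We would follow Maurer's argument: replace the scalar Rademacher terms $\sigma_i h_i(f(x_i))$ by vector-valued ones one index at a time. Each replacement is conditional on all the other Rademacher variables, and the $N$ replacements are chained. It suffices to prove the following one-step inequality. Let $\Psi:\mathcal F\to\mathbb R$ be an arbitrary function; it will collect the already-processed terms $\sqrt2 L\sum_{k}\sigma_{jk}f_k(x_j)$ for $j<i$ and the not-yet-processed terms $\sigma_j h_j(f(x_j))$ for $j>i$, with their Rademacher variables frozen. Fix a point $x$ and an $L$-Lipschitz $h:\mathbb R^K\to\mathbb R$. The claim is
\begin{equation*}
\mathbb E_{\sigma}\sup_{f\in\mathcal F}\bigl[\Psi(f)+\sigma\,h(f(x))\bigr]\le \mathbb E_{\sigma_1,\dots,\sigma_K}\sup_{f\in\mathcal F}\Bigl[\Psi(f)+\sqrt2\,L\sum_{k=1}^K\sigma_k f_k(x)\Bigr].
\end{equation*}
We apply this with $x=x_i$, $h=h_i$, and the fresh variables $\sigma_k=\sigma_{ik}$. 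Taking expectation over the frozen variables (Fubini) and iterating over $i=1,\dots,N$ gives the lemma. To avoid measurability and attainment issues, we would first assume $\mathcal F$ is finite, or restrict all suprema to finite subsets. We then pass to the general case by monotone convergence over an increasing net of finite subclasses.

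For the one-step inequality, we would first average over the single sign $\sigma$:
\begin{equation*}
\mathbb E_\sigma\sup_f\bigl[\Psi(f)+\sigma h(f(x))\bigr]=\tfrac12\sup_{f,g}\bigl[\Psi(f)+\Psi(g)+h(f(x))-h(g(x))\bigr]\le\tfrac12\sup_{f,g}\bigl[\Psi(f)+\Psi(g)+L\|f(x)-g(x)\|\bigr].
\end{equation*}
Next we would bound the Euclidean norm of $v=f(x)-g(x)\in\mathbb R^K$ by the optimal-constant Khintchine inequality (Szarek):
\begin{equation*}
\|v\|\le\sqrt2\,\mathbb E\Bigl|\sum_k\sigma_k v_k\Bigr|.
\end{equation*}
Pulling the expectation outside the supremum (sup of expectations $\le$ expectation of sup) gives the bound
\begin{equation*}
\tfrac12\,\mathbb E_{\sigma_{1:K}}\sup_{f,g}\Bigl[\Psi(f)+\Psi(g)+\sqrt2L\,\Bigl|\sum_k\sigma_k(f_k(x)-g_k(x))\Bigr|\Bigr].
\end{equation*}

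Since the expression inside the supremum is symmetric under swapping $f$ and $g$, the absolute value can be dropped. The supremum then splits as
\begin{equation*}
\sup_f\Bigl[\Psi(f)+\sqrt2L\sum_k\sigma_kf_k(x)\Bigr]+\sup_g\Bigl[\Psi(g)-\sqrt2L\sum_k\sigma_kg_k(x)\Bigr].
\end{equation*}
The vector $(\sigma_k)$ has the same distribution as $(-\sigma_k)$, so both halves have equal expectation. The factor $\tfrac12$ cancels and the one-step inequality follows.

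The main obstacle is the Khintchine step with the sharp constant $\sqrt2$. This is precisely where the constant in the statement comes from, and we would simply invoke Szarek's theorem for it rather than reprove it. A cruder Khintchine constant would still give the inequality, only with a worse factor. The remaining technical care is in the conditioning and iteration bookkeeping: each $\Psi$ must depend only on variables independent of the index currently being processed.
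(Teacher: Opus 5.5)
Your proposal is correct and is exactly Maurer's proof of the cited result: conditional one-step replacement, Szarek's sharp constant $\|v\|\le\sqrt{2}\,\mathbb{E}|\sum_k\sigma_k v_k|$, dropping the absolute value by the $f\leftrightarrow g$ symmetry, and chaining over $i$. The paper gives no proof of its own and simply invokes Maurer's Corollary~4; your reduction to finite subclasses is also sound, since the Rademacher expectations are finite averages over sign patterns and so commute with the increasing limit over finite subclasses.
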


\begin{lemma}
\label{lem:2d-contract}
Let $\mathcal U,\mathcal V$ be classes of measurable functions from $X$ to $\mathbb R$, and
let a function $\psi$ satisfy 
\[
|\psi(u,v)-\psi(u',v')|
\le L_u |u-u'| + L_v |v-v'|
\]
for any pairs $(u,v),(u',v')$ from the range of values $\mathcal U,\mathcal V$.
Then
\[
\mathfrak R_N\bigl(
\{\psi(u(\cdot),v(\cdot)):\ u\in\mathcal U,\ v\in\mathcal V\},\,p
\bigr)
\le
2L_u\,\mathfrak R_N(\mathcal U,p)
+
2L_v\,\mathfrak R_N(\mathcal V,p).
\]
\end{lemma}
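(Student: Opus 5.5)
The plan is to reduce the statement to the one-dimensional contraction of Lemma~\ref{lem:vector-contraction}. Write $\Psi(x)=\psi(u(x),v(x))$ and introduce the vector-valued class $\mathcal W=\{w=(u,v):u\in\mathcal U,v\in\mathcal V\}$, with $w:X\to\mathbb R^2$. The key observation is that $\psi$ is Lipschitz on $\mathbb R^2$ with respect to the Euclidean norm. By Cauchy--Schwarz,
\[
|\psi(u,v)-\psi(u',v')|\le L_u|u-u'|+L_v|v-v'|\le \sqrt{L_u^2+L_v^2}\,\|(u,v)-(u',v')\|_2 .
\]
The hypothesis holds only on the range of values. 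If needed, I would first extend $\psi$ to all of $\mathbb R^2$ with the same weighted-$\ell_1$ Lipschitz bound, via a McShane-type extension $\tilde\psi(a)=\inf_{b}\{\psi(b)+L_u|a_1-b_1|+L_v|a_2-b_2|\}$. Only values on the range matter for the Rademacher sums anyway.

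Next, fix a sample $S$ and apply Lemma~\ref{lem:vector-contraction} with $K=2$, $h_i=\psi$ and $L=\sqrt{L_u^2+L_v^2}$. This gives
\[
\mathbb E_\sigma\sup_{u,v}\sum_i\sigma_i\psi(u(x_i),v(x_i))\le \sqrt2\,L\,\mathbb E\sup_{u,v}\Big[\sum_i\sigma_{i1}u(x_i)+\sum_i\sigma_{i2}v(x_i)\Big].
\]
Because $u$ and $v$ range independently over $\mathcal U$ and $\mathcal V$, the supremum of the sum splits into the sum of the two suprema. The two families $\sigma_{i1}$ and $\sigma_{i2}$ are each Rademacher, so the right side equals $\sqrt2 L\,N(\widehat{\mathfrak R}_S(\mathcal U)+\widehat{\mathfrak R}_S(\mathcal V))$. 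Dividing by $N$ and taking the expectation over $S\sim p^N$ yields
\[
\mathfrak R_N(\Psi\text{-class})\le \sqrt2\sqrt{L_u^2+L_v^2}\,\big(\mathfrak R_N(\mathcal U,p)+\mathfrak R_N(\mathcal V,p)\big).
\]

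Finally, I bound the constant. We have $\sqrt2\sqrt{L_u^2+L_v^2}\le\sqrt2(L_u+L_v)\le 2\max(L_u,L_v)$. This does not immediately give the separated form $2L_u\mathfrak R(\mathcal U)+2L_v\mathfrak R(\mathcal V)$, and this is the main obstacle: obtaining the claimed coefficients requires rescaling. I would apply the argument to the rescaled classes $L_u\mathcal U$ and $L_v\mathcal V$, with $\tilde\psi(a,b)=\psi(a/L_u,b/L_v)$, which is $1$-Lipschitz in each coordinate. Then $\tilde\psi$ is $\sqrt2$-Lipschitz in the Euclidean norm. The vector contraction therefore gives the factor $\sqrt2\cdot\sqrt2=2$, and by homogeneity of Rademacher complexity, $\mathfrak R_N(L_u\mathcal U)=L_u\mathfrak R_N(\mathcal U)$. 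This yields exactly $2L_u\mathfrak R_N(\mathcal U,p)+2L_v\mathfrak R_N(\mathcal V,p)$.

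The degenerate cases $L_u=0$ or $L_v=0$ need separate handling. In that case $\psi$ depends on only one argument, and the scalar Talagrand contraction applies directly, or one can take a limit $L_u\downarrow0$.
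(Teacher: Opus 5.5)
Your final argument is the paper's proof: rescale to $h(a,b)=\psi(a/L_u,b/L_v)$, note that it is $\sqrt2$-Lipschitz in the Euclidean norm, and apply the vector contraction with $K=2$ to the class $\{(L_u u, L_v v)\}$ to get the factor $\sqrt2\cdot\sqrt2=2$; then split the supremum and average over $S$. The McShane extension and the $L_u=0$ or $L_v=0$ cases are extra care the paper omits (its division by $L_u,L_v$ tacitly assumes both are positive), and the only slip is the side remark $\sqrt2(L_u+L_v)\le 2\max(L_u,L_v)$, which is false in general (only $\sqrt2\sqrt{L_u^2+L_v^2}\le 2\max(L_u,L_v)$ holds) but plays no role in your final argument.
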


\begin{proof}
Denote
\[
\mathcal G
:=
\{\psi(u(\cdot),v(\cdot)):\ u\in\mathcal U,\ v\in\mathcal V\}.
\]
Fix a sample $S=(x_1,\dots,x_N)$ and define the vector-valued class
\[
\mathcal F
:=
\{x\mapsto (L_u u(x),\,L_v v(x)):\ u\in\mathcal U,\ v\in\mathcal V\}.
\]
Also define, %
\[
h(a,b):=\psi(a/L_u,b/L_v)
\]
For all feasible pairs
$(a,b),(a',b')$, we have
\[
|h_i(a,b)-h_i(a',b')|
=
\left|
\psi(a/L_u,b/L_v)-\psi(a'/L_u,b'/L_v)
\right|
\]
\[
\le
L_u\left|\frac{a-a'}{L_u}\right|
+
L_v\left|\frac{b-b'}{L_v}\right|
=
|a-a'|+|b-b'|
\le
\sqrt{2}\,\|(a,b)-(a',b')\|_2.
\]
Thus each $h_i$ is $\sqrt{2}$-Lipschitz with respect to the Euclidean norm.

Now apply Lemma~\ref{lem:vector-contraction} with $K=2$ and $L=\sqrt{2}$:
\[
\widehat{\mathfrak R}_S(\mathcal G)
=
\frac1N\,
\mathbb E_\sigma\left[
\sup_{u\in\mathcal U,\ v\in\mathcal V}
\sum_{i=1}^N \sigma_i \psi(u(x_i),v(x_i))
\right]
\le
\frac{2}{N}\,
\mathbb E_{\sigma_{i1},\sigma_{i2}}
\left[
\sup_{u\in\mathcal U,\ v\in\mathcal V}
\sum_{i=1}^N
\bigl(
\sigma_{i1}L_u u(x_i)+\sigma_{i2}L_v v(x_i)
\bigr)
\right].
\]
Using $\sup(A+B)\le \sup A+\sup B$, we obtain
\[
\widehat{\mathfrak R}_S(\mathcal G)
\le
\frac{2L_u}{N}\,
\mathbb E_{\sigma_{i1}}
\left[
\sup_{u\in\mathcal U}\sum_{i=1}^N \sigma_{i1}u(x_i)
\right]
+
\frac{2L_v}{N}\,
\mathbb E_{\sigma_{i2}}
\left[
\sup_{v\in\mathcal V}\sum_{i=1}^N \sigma_{i2}v(x_i)
\right].
\]
Since $\{\sigma_{i1}\}_{i=1}^N$ and $\{\sigma_{i2}\}_{i=1}^N$ are again independent
Rademacher families, this is exactly
\[
\widehat{\mathfrak R}_S(\mathcal G)
\le
2L_u\,\widehat{\mathfrak R}_S(\mathcal U)
+
2L_v\,\widehat{\mathfrak R}_S(\mathcal V).
\]
Taking expectation over $S\sim p^N$ yields
\[
\mathfrak R_N(\mathcal G,p)
\le
2L_u\,\mathfrak R_N(\mathcal U,p)
+
2L_v\,\mathfrak R_N(\mathcal V,p).
\]
\end{proof}

\subsection{Proof of Proposition~\ref{lem:variational-Z}}

\begin{proof}
Make the change of variables $x_1=x_0+\sqrt{\varepsilon}\,z$, so that $dx_1=\varepsilon^{D/2}dz$ and hence
\[
Z_f(x_0)
=(2\pi\varepsilon)^{D/2}\,\E_{z\sim\mathcal{N}(0,I)}\!\left[\exp\!\left(\frac{f(x_0+\sqrt{\varepsilon}z)}{\varepsilon}\right)\right]
=\E_{z\sim \mathcal{N}(0,I)}\!\left[\exp\!\left(A(z)\right)\right],
\]
where
\[
A(z):=\frac{f(x_0+\sqrt{\varepsilon}z)}{\varepsilon}+\underbrace{\frac{D}{2}\log(2\pi\varepsilon)}_{\defeq C}.
\]
Then for any $\xi(x_0)\in\R$,
\[
\log Z_f(x_0)=\log \E_{z\sim \mathcal{N}(0,I)}[e^{A}]=(\xi(x_0) + C)+\log \E[e^{A-(\xi(x_0) + C)}]
\le (\xi(x_0) + C)+\bigl(\E[e^{A-(\xi(x_0) + C)}]-1\bigr),
\]
using pointwise $\log u\le u-1$ for all $u>0$.
Equality holds iff $\E[e^{A- (\xi + C)}]=1$, i.e. when $\xi + C=\log \E[e^{A}]=\log Z_f(x_0)$.
\end{proof}

\subsection{Proof of Theorem~\ref{prop:euclidean-dual}}
\label{app:Proofs:prop_euclidean_dual}

\begin{proof}
Recall the weak dual form of entropic OT
\[
\mathrm{EOT}_\varepsilon(p_0,p_1)
=\sup_f\Bigl\{\E_{x_1\sim p_1}[f(x_1)]-\varepsilon\,\E_{x_0\sim p_0}\bigl[\log Z(f,x_0)\bigr]\Bigr\},
\]
where $Z(f,x_0)=\int_{\R^D} \exp\!\bigl((f(x_1)-\tfrac12\|x_1-x_0\|^2)/\varepsilon\bigr)\,dx_1$.
By Proposition~\ref{lem:variational-Z} with $C=\tfrac D2\log(2\pi\varepsilon)$, for all $x_0\in\R^D$,
\[
\log Z(f,x_0)\le C-1+\xi(x_0)+\E_{z\sim\mathcal{N}(0,I)}
\Bigl[\exp\!\Bigl(\frac{f(x_0+\sqrt\varepsilon z)}{\varepsilon}-\xi(x_0)\Bigr)\Bigr].
\]
Plugging this upper bound into the weak dual (and using that it holds pointwise in $x_0$) yields
\[
\mathrm{EOT}_\varepsilon(p_0,p_1)\;\ge\;\sup_{f,\xi}\,\mathcal L(f,\xi).
\]
Conversely, since the inequality holds for every $\xi$, for each fixed $f$ we have
$\mathcal L(f,\xi)\le \E_{p_1}[f]-\varepsilon\E_{p_0}[\log Z(f,x_0)]$, hence
$\sup_{f,\xi}\mathcal L(f,\xi)\le \mathrm{EOT}_\varepsilon(p_0,p_1)$.
Finally, the bound is tight at $\xi(x_0)+C=\log Z(f,x_0)$ by Proposition~\ref{lem:variational-Z}, so equality holds.
Defining $\mathcal L(f):=\sup_\xi\mathcal L(f,\xi)$ gives $\sup_f\mathcal L(f)=\sup_{f,\xi}\mathcal L(f,\xi)$.
\end{proof}

\subsection{Proof of Theorem \ref{prop:fxiequality}}
\begin{proof}
Let the $\sup$ in the dual problem \eqref{eq:dual-final} be taken over a function $f^*\in L_1(p_1)$ (non-continuous in general), and set
\(
\xi^*(x_0):= \xi_{f^*}.
\)
Note that $\pi^*=\pi^{f^*}=\pi^{f^*,\xi^*}$.
Therefore,
\begin{align*}
\varepsilon\int\log\!\left(\frac{d\pi^*}{d\pi^{f,\xi}}\right)\,d\pi^*
=
\varepsilon\int_{X_0}( \xi(x_0)-\xi^*(x_0))\,dp_0(x_0)
+\int_{X_1} \bigl(f^*(x_1)-f(x_1)\bigr)\,dp_1(x_1),
\end{align*}
where we used that $\pi^*$ has marginals $p_0$ and $p_1$.

Next, we compute the total mass of $\pi^{f,\xi}$:
\[
\int_{\R^D}\int_{\R^D} \frac{p_0(x)}{(2\pi\varepsilon)^{D/2}}
\exp\!\left(\frac{f(x_1)-\frac12\|x_0-x_1\|^2}{\varepsilon}-\xi(x_0)\right)\,dx_1\,dx_0
=
\int_{X_0} \frac{Z_f(x_0)}{(2\pi\varepsilon)^{D/2}\exp\xi(x_0)}\,dp_0(x_0).
\]
Since $\pi^*$ is a probability measure. Combining the pieces, we obtain
\begin{align*}
\varepsilon\,\mathrm{KL}(\pi^*\|\pi^{f,\xi})=
\left(\int_{\R^D} f^*\,dp_1-\varepsilon\int_{\R^D} \xi^*\,dp_0\right)
-\left(\int_{\R^D} f\,dp_1-\varepsilon\int_{\R^D} \xi\,dp_0-\varepsilon\int_{\R^D}\frac{Z_f}{(2\pi\varepsilon)^{D/2}\exp\xi}\,dp_0+\varepsilon\right).
\end{align*}
Finally, since $(2\pi\varepsilon)^{D/2}\exp\xi^*=Z_{f^*}$, the first bracket equals $\mathcal{L}^*+\frac{\varepsilon D}{2}\log(2\pi\varepsilon)$, while the second bracket is precisely $\mathcal{L}(f,\xi)+\frac{\varepsilon D}{2}\log(2\pi\varepsilon)$ as defined in
\eqref{eq:final-loss}. Hence,
\[
\varepsilon\,\mathrm{KL}\!\left(\pi^*\,\middle\|\,\pi^{f,\xi}\right)
=
\mathcal{L}^*-\mathcal{L}(f,\xi),
\]
which proves \eqref{eq:excess-kl}.
\end{proof}

For the reader’s convenience, we briefly recall the empirical loss
\begin{equation}
\widehat{\mathcal{L}}(f, \xi) 
=
\varepsilon - \frac{\varepsilon D}{2}\log(2 \pi \varepsilon)+
\frac{1}{N_1} \sum_{j=1}^{N_1}{f(x_j^1)}
-
\frac{\varepsilon}{N_0} \sum_{i=1}^{N_0}
\xi(x_i^0)
-
\frac{\varepsilon}{N_0K}
\sum_{i,k=1}^{N_0,K}
\exp\!\left(
\frac{f(x_i^0 + \sqrt{\varepsilon}\, z_{i,k})}{\varepsilon} - \xi(x_i^0)
\right),
\end{equation}
where the samples are taken from the corresponding distributions $x^1_j \sim p_1,\; x^0_i \sim p_0,\; z_{i,k}\sim \mathcal{N}(0,I)$.

\subsection{Proof of Proposition \ref{prop:est-approx-decay}}
\begin{proof}We assume that all weight and bias parameters belong to closed and bounded
sets, hence the parameter sets
of $\cF$ and $\Xi$ are compact.
 Let 
 $$
 (\hat {f},\hat\xi)=\argmax_{f\in\mathcal{F},\xi\in\Xi}\; \widehat{\mathcal{L}}\left(f,\xi\right),
 $$
 and recall the notation $\xi_f=\log Z_{f}-\frac{D}{2}\log(2\pi\varepsilon)$.  
    Since \(\xi_f\) maximizes \(\mathcal L(f,\xi)\) over \(\xi\) for fixed \(f\), we have
\[
\mathcal L(\hat f,\xi_{\hat f})\ge \mathcal L(\hat f,\hat\xi).
\]
Therefore, by Theorem \ref{prop:fxiequality}
$$
\varepsilon\mathrm{KL}\!\left(\pi^*\middle\|\pi^{\hat f}\right)
=
\mathcal L^* - \mathcal L(\hat f,\xi_{\hat f})
\le
\mathcal L^* -\mathcal L(\hat f,\hat\xi)
=
\mathcal L^*-\sup_{f\in\mathcal F,\ \xi\in\Xi}\mathcal L(f,\xi)+\underbrace{\sup_{f\in\mathcal F,\ \xi\in\Xi}\mathcal L(f,\xi)-\mathcal L(\hat f,\hat\xi)}_{(*)}.
$$
$$
(*) = \sup_{f\in\cF,\xi\in\Xi} \mathcal L(f,\xi) - \sup_{f\in\cF,\xi\in\Xi} \widehat{ \mathcal L}(f,\xi) + \widehat{ \mathcal L}(\hat f,\hat \xi)- \mathcal L(\hat f,\hat \xi)
\le
\sup_{f\in\cF,\xi\in\Xi}\left[ \mathcal L(f,\xi) - \widehat{ \mathcal L}(f,\xi)\right]
+ 
\underbrace{\widehat{ \mathcal L}(\hat f,\hat \xi)- \mathcal L(\hat f,\hat \xi)}_{\le\sup_{f\in\cF,\xi\in\Xi}\left|\mathcal L(f,\xi)-\widehat{\mathcal L}(f,\xi)\right|}
$$
Combining the above estimates yields
\[
\varepsilon\,\mathrm{KL}\!\left(\pi^\ast\middle\|\pi^{\hat f}\right)
\le
\mathcal L^* -\sup_{f\in\mathcal F,\ \xi\in\Xi}\mathcal L(f,\xi)
+
2\sup_{f\in\mathcal F,\ \xi\in\Xi}
\left|\mathcal L(f,\xi)-\widehat{\mathcal L}(f,\xi)\right|.
\]
Taking expectation completes the proof.
\end{proof}

\subsection{Proof of Theorem  \ref{thm:estim}}

\begin{proof}

Let $M_\cF$ and $M_\Xi$ be clipping constants in classes $\cF$ and $\Xi$ respectively. 

Set
\[
h_{f,\xi}(x,z):=
\exp\!\left(\frac{f(x+\sqrt\varepsilon z)}{\varepsilon}-\xi(x)\right),
\qquad
g_{f,\xi}(x):=\mathbb E_{z\sim\mathcal N(0,I)} h_{f,\xi}(x,z).
\]
Then
\[
\mathcal L(f,\xi)-\widehat{\mathcal L}(f,\xi)
=
\Bigl(\mathbb E_{p_1}f-\frac1{N_1}\sum_{j=1}^{N_1}f(x_j^1)\Bigr)
-\varepsilon
\Bigl(\mathbb E_{p_0}\xi-\frac1{N_0}\sum_{i=1}^{N_0}\xi(x_i^0)\Bigr)
-\varepsilon
\Bigl(\mathbb E_{p_0}g_{f,\xi}-\frac1{N_0K}\sum_{i,k} h_{f,\xi}(x_i^0,z_{ik})\Bigr).
\]
Hence, by the triangle inequality,
\[
\sup_{f\in\mathcal F,\ \xi\in\Xi}
\left|
\mathcal L(f,\xi)-\widehat{\mathcal L}(f,\xi)
\right|
\le
\Delta_1+\varepsilon \Delta_2+\varepsilon \Delta_3+\varepsilon \Delta_4,
\]
where
\[
\Delta_1:=
\sup_{f\in\mathcal F}
\left|
\mathbb E_{p_1}f-\frac1{N_1}\sum_{j=1}^{N_1}f(x_j^1)
\right|,
\quad
\Delta_2:=
\sup_{\xi\in\Xi}
\left|
\mathbb E_{p_0}\xi-\frac1{N_0}\sum_{i=1}^{N_0}\xi(x_i^0)
\right|,
\]
\[
\Delta_3:=
\sup_{f,\xi}
\left|
\mathbb E_{p_0}g_{f,\xi}-\frac1{N_0}\sum_{i=1}^{N_0}g_{f,\xi}(x_i^0)
\right|,
\quad
\Delta_4:=
\sup_{f,\xi}
\left|
\frac1{N_0}\sum_{i=1}^{N_0}g_{f,\xi}(x_i^0)
-\frac1{N_0K}\sum_{i=1}^{N_0}\sum_{k=1}^{K}h_{f,\xi}(x_i^0,z_{ik})
\right|.
\]

By Lemma \ref{lem:sym},
\[
\E[\Delta_1]\le 2\,\mathfrak R_{N_1}(\cF,p_1),
\quad
\E[\Delta_2]\le 2\,\mathfrak R_{N_0}(\Xi,p_0),
\]
where all bounds are taken in expectation with respect to the random samples.

Recall that the classes $\mathcal F$ and $\Xi$ consist of feedforward neural networks
with depth at most $n$, with uniformly bounded layer norms, and with output clipping. Also the hidden activation $\sigma$ be $L_\sigma$-Lipschitz and satisfies
$\sigma(0)=0$, and that the layer matrices satisfy the row-wise constraints from \citep[Theorem~2]{GolowichRakhlinShamir2018}.
Then there exists a constants $C^{p_1}_{\mathcal F}>0$ and $C^{p_0}_{\Xi}>0$ such that
\begin{equation}\label{eq:Nsqrt_asymptotic}
\mathbb E[\Delta_1]
\le
\frac{C^{p_1}_{\mathcal F}}{\sqrt{N_1}},
\quad
\mathbb E[\Delta_2]
\le
\frac{C^{p_0}_{\Xi}}{\sqrt{N_0}}.
\end{equation}

\paragraph{Estimation of $\Delta_3$.} Let $\gamma_\varepsilon=\mathcal N(0,\varepsilon I)$ and let $\rho_\varepsilon(y)$ be its density. Consider the classes (note that $\gamma_1=\mathcal{N}(0,I)$)
\[
\mathcal{Z}:=\left\{z_f:=\int
\exp\!\left(\frac{f(x+\sqrt{\varepsilon}\,z)}{\varepsilon}\right)d\gamma_1(z):\ f\in\cF\right\},
\quad
\mathcal E:=\{e^{-\xi}:\ \xi\in\Xi\},
\quad
\mathcal G:=\{g_{f,\xi}=e^{-\xi}z_f:\ f\in\mathcal F,\ \xi\in\Xi\}.
\]
By Lemma \ref{lem:sym},
\[
\mathbb E[\Delta_3]\le 2\,\mathfrak R_{N_0}(\mathcal G,p_0).
\]

For $x\in X_0$, define \(
q_x(y):=\rho_\varepsilon(y-x)/\rho_\varepsilon(y)\)
and for $f\in\mathcal F$, define
\[
\phi_f(y):=\exp\!\left(\frac{f(y)}{\varepsilon}\right).
\]
Then
\[
z_f(x)
=
\int_{\mathbb R^D}\phi_f(y)\, \rho_\varepsilon(y-x)\,dy
=
\int_{\mathbb R^D}\phi_f(y)\, q_x(y)\, d\gamma_\varepsilon(y)
=
\langle \phi_f,q_x\rangle_{L^2(\gamma_\varepsilon)}.
\]
Since $|f|\le M_{\cF}$, we have
\[
\|\phi_f\|_{L^2(\gamma_\varepsilon)}
\le e^{M_\cF/\varepsilon}, \qquad \|q_x\|_{L^2(\gamma_\varepsilon)}^2 = \exp\left(\frac{\|x\|^2}{\varepsilon}\right)
\]
Therefore,
\[
\sup_{x\in \mathrm{supp}(p_0)}\|q_x\|_{L^2(\gamma_\varepsilon)}
\le
\exp\!\left(\frac{R_0^2}{2\varepsilon}\right),\quad \text{where } R_0= \max_{x\in\mathrm{supp}(p_0)} \|x\| 
\]
Using the representation above and given $\{\sigma_i\}_{i=1}^{N_0}$ -- i.i.d. Rademacher signs 
\[
\frac{1}{N_0}\sum_{i=1}^{N_0} \sigma_i z_f(x_i)
=
\left\langle
\phi_f,\,
\frac{1}{N_0}\sum_{i=1}^{N_0} \sigma_i q_{x_i}
\right\rangle_{L^2(\gamma_\varepsilon)}.
\]
Hence, by Cauchy--Schwarz,
\[
\mathfrak R_{N_0}(\mathcal Z,p_0)
\le
e^{M_\cF/\varepsilon}\,
\mathbb E_{x,\sigma}\left[
\left\|
\frac{1}{N_0}\sum_{i=1}^{N_0} \sigma_i q_{x_i}
\right\|_{L^2(\gamma_\varepsilon)}
\right].
\]
Applying Jensen's inequality, the independence of $\sigma_i$ and $\E\sigma_i=0$ of Rademacher signs, we get
\[
\mathfrak R_{N_0}(\mathcal Z,p_0)
\le
e^{M_\cF/\varepsilon}
\left(
\mathbb E
\left\|
\frac{1}{N_0}\sum_{i=1}^{N_0} \sigma_i q_{x_i}
\right\|_{L^2(\gamma_\varepsilon)}^2
\right)^{1/2}
=
e^{M_\cF/\varepsilon}
\left(
\frac{1}{N_0^2}
\sum_{i=1}^{N_0}
\mathbb E\|q_{x_i}\|_{L^2(\gamma_\varepsilon)}^2
\right)^{1/2}
\le
\frac{e^{M_\cF/\varepsilon+R_0^2/(2\varepsilon)}}{\sqrt N_0}.
\]

Note that $|\xi|\le M_\Xi$, the function \(\phi(t):=e^{-t}\) is $e^{M_\Xi}$-Lipschitz on $[-M_\Xi,M_\Xi]$, and
\(
\mathfrak R_{N_0}(\mathcal E,p_0)
=
\mathfrak R_{N_0}(\phi\circ \Xi,p_0).
\)
Hence, by Lemma \ref{lem:vector-contraction} and \eqref{eq:Nsqrt_asymptotic},
\[
\mathfrak R_{N_0}(\mathcal E,p_0)
\le
\sqrt{2}\,e^{M_\Xi}\,\mathfrak R_{N_0}(\Xi,p_0)
\le
\frac{\sqrt{2}\,e^{M_\Xi} C^{p_0}_\Xi}{\sqrt{N_0}}.
\]
Note that the classes $\mathcal E$ and $\mathcal Z$ are uniformly bounded
\[
e^{-M_\Xi}\le e^{-\xi(x)}\le e^{M_\Xi},
\qquad
0\le z_f(x)\le e^{M_\cF/\varepsilon}.
\]
Consider the map $\psi(u,v):=uv$ on the rectangle $[0,e^{M_\cF/\varepsilon}]\times [e^{-M_\Xi},e^{M_\Xi}]$.
It is coordinatewise Lipschitz with constants
$L_u=e^{M_\Xi}$ and $L_v=e^{M_\cF/\varepsilon}$.
Applying Lemma \ref{lem:2d-contract} to
\[
\mathcal Z\times \mathcal E
=
\{(z_f,e^{-\xi}): f\in\mathcal F,\ \xi\in\Xi\},
\]
we obtain
\[
\mathfrak R_{N_0}(\mathcal G,p_0)
\le
{2}e^{M_\Xi}\,\mathfrak R_{N_0}(\mathcal Z,p_0)
+
{2}e^{M_\cF/\varepsilon}\,\mathfrak R_{N_0}(\mathcal E,p_0).
\]
Combining this with the bounds above yields
\[
\mathfrak R_{N_0}(\mathcal G,p_0)
\le
\frac{2e^{M_\cF/\varepsilon+M_\Xi}}{\sqrt{N_0}}\left(e^{R_0^2/2\varepsilon}
+
\sqrt{2}\,C^{p_0}_\Xi\right).
\]

\paragraph{Estimation of $\Delta_4$.} 
 Fix $x\in X_0$ and set
\[
\cF_x:=\{z\mapsto f(x+\sqrt{\varepsilon}z): f\in\cF\}.
\]
Since $\xi(x)$ does not depend on $z$ and $|\xi|\le M_\Xi$, we have
\[
\sup_{f,\xi}
\left|
\E_z h_{f,\xi}(x,z)-\frac1K\sum_{k=1}^K h_{f,\xi}(x,z_k)
\right|
\le
e^{M_\Xi}
\sup_{f\in\cF}
\left|
\E_z e^{f(x+\sqrt{\varepsilon}z)/\varepsilon}
-\frac1K\sum_{k=1}^K e^{f(x+\sqrt{\varepsilon}z_k)/\varepsilon}
\right|.
\]
By symmetrization and the contraction inequality applied to
$t\mapsto e^{t/\varepsilon}$ on $[-M_\cF,M_\cF]$, whose Lipschitz constant is
$e^{M_\cF/\varepsilon}/\varepsilon$, we get
\[
\E_z\sup_{f,\xi}
\left|
\E_z h_{f,\xi}(x,z)-\frac1K\sum_{k=1}^K h_{f,\xi}(x,z_k)
\right|
\le
\frac{C e^{M_\Xi+M_\cF/\varepsilon}}{\varepsilon}
\mathfrak R_K(\cF_x,\gamma_1),
\]
where $\gamma_1=\mathcal N(0,I)$ and $C>0$ is a universal constant.

We now use only the sample-dependent Rademacher bound from
\citep[Theorem~1]{GolowichRakhlinShamir2018}. In the notation adapted to our
network class, it gives a constant $A_\cF>0$, depending only on the depth and
the layer-norm bounds of $\cF$, such that for every deterministic sample
$y_1,\dots,y_K\in\mathbb R^D$,
\[
\widehat{\mathfrak R}_{\left\{y_k\right\}_{k=1}^K}(\cF)
\le
\frac{A_\cF}{K}
\left(\sum_{k=1}^K \|y_k\|^2\right)^{1/2}.
\]
Applying this inequality to $y_k=x+\sqrt{\varepsilon}z_k$ yields
\[
\mathfrak R_K(\cF_x,\gamma_1)
\le
\frac{A_\cF}{K}
\E\left(\sum_{k=1}^K\|x+\sqrt{\varepsilon}Z_k\|^2\right)^{1/2}.
\]
By Jensen's inequality,
\[
\E\left(\sum_{k=1}^K\|x+\sqrt{\varepsilon}Z_k\|^2\right)^{1/2}
\le
\left(K\E\|x+\sqrt{\varepsilon}Z\|^2\right)^{1/2}
=
\sqrt K\,(\|x\|^2+\varepsilon D)^{1/2}.
\]
Since $X_0$ is compact, $R_0:=\sup_{x\in X_0}\|x\|<\infty$, and therefore
\[
\sup_{x\in X_0}\mathfrak R_K(\cF_x,\gamma)
\le
\frac{A_\cF (R_0^2+\varepsilon D)^{1/2}}{\sqrt K}.
\]
Consequently,
\[
\E[\Delta_4]
\le
\frac{C e^{M_\Xi+M_\cF/\varepsilon}}{\varepsilon}
\frac{A_\cF (R_0^2+\varepsilon D)^{1/2}}{\sqrt K}.
\]

Combining the bounds for $\Delta_1,\Delta_2,\Delta_3,\Delta_4$, we obtain
constants $C_1,C_2,C_3>0$, depending only on $\varepsilon$, the clipping levels,
the network-complexity parameters, and the support bounds, such that
\[
\E\sup_{f\in\cF,\xi\in\Xi}
\left|
\mathcal L(f,\xi)-\widehat{\mathcal L}(f,\xi)
\right|
\le
\frac{C_1}{\sqrt{N_1}}
+
\frac{C_2}{\sqrt{N_0}}
+
\frac{C_3}{\sqrt K}.
\]
This proves the claimed
$O(N_1^{-1/2})+O(N_0^{-1/2})+O(K^{-1/2})$ bound.

\end{proof}

\subsection{Proof of Theorem \ref{thm:approx}}

\begin{proof}

It is sufficient to take the supremum in the weak dual problem \eqref{eq:dual-final} over continuous potentials. Indeed, under the standing admissibility assumptions, every admissible measurable potential can first be approximated, in the value of the functional, by its bounded truncations. For bounded potentials, approximation by continuous functions in the weighted space naturally associated with the Gaussian kernel in $Z_f$ preserves both terms of
$$
\mathcal L(f)
=
\int f(x_1)\,dp_1(x_1)
-
\varepsilon\int \log Z_f(x_0)\,dp_0(x_0).
$$
The first term is stable by $L^1(p_1)$-convergence, and the log-partition term is stable because the exponential is uniformly Lipschitz on bounded ranges. Hence the value of the supremum is unchanged if the admissible measurable class is replaced by $C(\mathbb R^D)$.

Fix $\delta>0$. Then there exist $f\in C(\R^D)$ such that
\[
\mathcal L^*-\mathcal L(f,\xi_f)<\frac{\delta}{3}.
\]
Recall that 
\[
\xi_f(x_0)
=
\log \mathbb E_{z\sim\mathcal N(0,I)}
\exp\!\left(\frac{f(x_0+\sqrt\varepsilon z)}{\varepsilon}\right),
\]
so $\xi_f$ is continuous in our setting. Further in the text, we will omit the index at $\xi_f$ and set $\xi:=\xi_f$.
Denote
\[
\Phi(f,\xi)
:=
\mathbb E_{x_0\sim p_0}\mathbb E_{z\sim\mathcal N(0,I)}
\exp\!\left(\frac{f(x_0+\sqrt\varepsilon z)}{\varepsilon}-\xi(x_0)\right).
\]
Then
\[
\mathcal L(f,\xi)
=
\varepsilon\Bigl(1-\frac D2\log(2\pi\varepsilon)\Bigr)
+\mathbb E_{p_1}[f]
-\varepsilon\mathbb E_{p_0}[\xi]
-\varepsilon\,\Phi(f,\xi).
\]

Since $f$ and $\xi$ are continuous and $X_1,X_0$ are compact, the restrictions
$f|_{X_1}$ and $\xi|_{X_0}$ are bounded. Choose $M>0$ so that
\[
M>\|f\|_{L^\infty(X_1)}+\|\xi\|_{L^\infty(X_0)}
\qquad\text{and}\qquad
\varepsilon e^{\|\xi\|_{L^\infty(X_0)}}e^{-M/\varepsilon}<\frac{\delta}{6}.
\]
Let
\[
T_M(t):=(-M)\vee t\wedge M,
\qquad
\bar f:=T_M(f).
\]
Then $\bar f\in C(\mathbb R^D)$ and $|\bar f|\le M$ on $\mathbb R^D$. Moreover,
$\bar f=f$ on $X_1$, hence the linear term $\mathbb E_{p_1}[f]$ does not change.
Also, clipping from above can only decrease the exponential term in $\Phi$, while clipping
from below changes it pointwise by at most
$
e^{\|\xi\|_{L^\infty(X_0)}}e^{-M/\varepsilon}.
$
Therefore
\[
\Phi(\bar f,\xi)\le \Phi(f,\xi)+e^{\|\xi\|_{L^\infty(X_0)}}e^{-M/\varepsilon},
\]
and so
\[
\mathcal L(\bar f,\xi)
\ge
\mathcal L(f,\xi)
-
\varepsilon e^{\|\xi\|_{L^\infty(X_0)}}e^{-M/\varepsilon}
>
\mathcal L^*-\frac{\delta}{2}.
\]
Next define the probability measure
\[
q(dx_1):=
\left(
\int_{X_0}(2\pi\varepsilon)^{-D/2}
\exp\!\left(-\frac{\|x_1-x_0\|^2}{2\varepsilon}\right)p_0(dx_0)
\right)dx_1.
\]
Let
\[
L_M:=e^{M/\varepsilon+M}\max\{\varepsilon^{-1},1\}.
\]
If $|f_1|,|f_2|\le M$ on $\mathbb R^D$ and $|\xi_1|,|\xi_2|\le M$ on $X_0$, then
\[
\bigl|\Phi(f_1,\xi_1)-\Phi(f_2,\xi_2)\bigr|
\le
L_M
\left(
\int_{\mathbb R^D}|f_1-f_2|\,dq
+
\int_{X_0}|\xi_1-\xi_2|\,dp_0
\right).
\]
Consequently,
\begin{equation}\label{eq:loss-stability-corrected}
\begin{aligned}
|\mathcal L(f_1,\xi_1)-\mathcal L(f_2,\xi_2)|
\le\;&
\int_{X_1}|f_1-f_2|\,dp_1
+\varepsilon\int_{X_0}|\xi_1-\xi_2|\,dp_0
\\
&\quad
+\varepsilon L_M
\left(
\int_{\mathbb R^D}|f_1-f_2|\,dq
+
\int_{X_0}|\xi_1-\xi_2|\,dp_0
\right).
\end{aligned}
\end{equation}

Choose $\alpha>0$ and then $R>0$ so large that
\[
X_1\subset B_R
\qquad\text{and}\qquad
q(\R^d \setminus B_R)<\alpha.
\]
Since $\sigma$ is Lipschitz, it is continuous; since it is also non-polynomial,
the universal approximation theorem for one-hidden-layer ridge networks with biases applies, see \citep[Theorem 3.1]{Pinkus1999}. More precisely, the theorem concerns the family
\[
M(\sigma)
=
\mathrm{span}\bigl\{x\mapsto \sigma(w^\top x+b):\; w\in\mathbb R^D,\ b\in\mathbb R\bigr\},
\]
which is dense in $C(K)$ for every compact $K\subset\mathbb R^D$.

Hence there exist one-hidden-layer networks
\[
g_\theta(x)=\sum_{k=1}^{m_1}a_k\,\sigma(w_k^\top x+b_k),
\qquad
h_\vartheta(x)=\sum_{\ell=1}^{m_2}c_\ell\,\sigma(u_\ell^\top x+d_\ell),
\]
such that
\[
\sup_{x\in B_R}|g_\theta(x)-\bar f(x)|<\alpha,
\qquad
\sup_{x\in X_0}|h_\vartheta(x)-\xi(x)|<\alpha.
\]
Now perform clipping after approximation and define
\[
f_\theta:=T_M\circ g_\theta,
\qquad
\xi_\vartheta:=T_M\circ h_\vartheta.
\]
Since $|\bar f|\le M$ on $\mathbb R^D$ and $|\xi|\le M$ on $X_0$, clipping does not increase
the approximation error on the target sets; thus
\[
\sup_{x\in B_R}|f_\theta(x)-\bar f(x)|<\alpha,
\qquad
\sup_{x\in X_0}|\xi_\vartheta(x)-\xi(x)|<\alpha,
\]
and, by construction,
\[
|f_\theta(x)|\le M \quad \forall x\in\mathbb R^D.
\]
Therefore,
\[
\int_{X_1}|f_\theta-\bar f|\,dp_1\le \alpha,
\qquad
\int_{X_0}|\xi_\vartheta-\xi|\,dp_0\le \alpha,
\]
and
\[
\int_{\mathbb R^D}|f_\theta-\bar f|\,dq
\le
\alpha\,q(B_R)+2M\,q(\R^d \setminus B_R)
\le
\alpha+2M\alpha.
\]
Substituting these bounds into \eqref{eq:loss-stability-corrected}, we obtain
\[
|\mathcal L(f_\theta,\xi_\vartheta)-\mathcal L(\bar f,\xi)|
\le
\alpha+\varepsilon\alpha+\varepsilon L_M\bigl(2\alpha+2M\alpha\bigr).
\]
Choosing $\alpha>0$ sufficiently small, we get
\[
|\mathcal L(f_\theta,\xi_\vartheta)-\mathcal L(\bar f,\xi)|<\frac{\delta}{2}.
\]
Hence
\[
\mathcal L(f_\theta,\xi_\vartheta)
>
\mathcal L^*-\delta.
\]

Finally, choose neural-network classes
$\mathcal F$ and $\Xi$ so that they contain the clipped networks $f_\theta$ and $\xi_\vartheta$ constructed above. Then
\[
\sup_{f\in\mathcal F,\ \xi\in\Xi}\mathcal L(f,\xi)
\ge
\mathcal L(f_\theta,\xi_\vartheta)
>
\mathcal L^*-\delta.
\]
This proves the claim.

\end{proof}

\section{Details of the Experiments}
\label{sec-exp-details}

\subsection{VarEOT: Optimization and Architecture}
\label{sec-vareot-details}

\paragraph{Optimization.}
Training is performed using the AdamW \citep{loshchilov2017decoupled}
optimizer with a learning rate $\text{lr} = 10^{-4}$, momentum
parameters $\beta_1 = 0.7$ and $\beta_2 = 0.8$, and weight decay set to
$10^{-4}$. We additionally employ an exponential moving average (EMA) of
model parameters with momentum $\text{ema\_momentum} = 0.999$, which is
used for evaluation. All models are trained for $10^4$ gradient steps.

\paragraph{Network architecture.}
The transport potential is parameterized by a multilayer perceptron
(MLP). The network consists of four fully connected layers with SiLU
activations. The input dimensionality is denoted by $d_{\text{in}}$
(512 for latent-space experiments), the hidden layer width is fixed to
256 units, and the output dimension is $d_{\text{out}} = 1$.

\paragraph{Training setup.}
All models are trained with a batch size of 256. The number of Monte
Carlo samples used to estimate expectations is fixed to $K = 256$ in the image
experiments, for MSCI we use $K = 128$.

\paragraph{Langevin inference.}
The number of Langevin steps $S$ (denoted as NFE) and the corresponding
step size are chosen depending on the experimental setting.
For all toy experiments on the SwissRoll dataset, we use $1000$
Langevin steps with a fixed step size of $10^{-3}$.
For image-based experiments in the ALAE latent space, we consider
multiple inference budgets. The step sizes for each NFE configuration
are selected based on FID performance, as illustrated in
Figure~\ref{fig:fid-heatmaps}. Specifically, we use a step size of $0.5$
for $\text{NFE}=2$, a step size of $0.1$ for $\text{NFE}=10$, and a step
size of $10^{-3}$ for $\text{NFE}=1000$.
\paragraph{Update schedule and practical considerations.}
We perform simultaneous optimization of both the dual potential network $\hat{f}_\theta$ and the auxiliary network $\hat{\xi}_\psi$ at every training step, which we found to lead to stable training and strong empirical performance. To further stabilize training, we clip the difference values appearing in the exponential term of \eqref{eq:empirical-loss} from above by $30$, and apply standard gradient clipping with norm threshold $1$.
\subsection{Baseline Methods}
\label{sec-baselines-details}

For completeness, we summarize the main technical details of the
baseline methods used for comparison in the unpaired image-to-image
translation experiments.

\paragraph{LightSB.}
For LightSB, we follow the standard configuration reported in prior
work. The model is trained using $K = 10$ Gaussian components, a
learning rate of $\text{lr} = 10^{-3}$, and a batch size of 128.
Optimization is performed for $10^4$ gradient steps.

\paragraph{EgNOT.}

For EgNOT, we follow the optimization/hyperparameter setup from the original work, but adopt the same neural network architecture for potential $f$ as in section \ref{sec-vareot-details} for fair comparison with VarEOT. Training/inference hyperparameters were selected by grid search over a reasonable parameter space based on FID performance. For all ALAE experiments, Adam optimizer is used with $\text{lr} = 5\cdot10^{-5}$, $(\beta_1, \beta_2) = (0, 0.999)$. Optimization is performed for $10^4$ gradient steps, batch size is $128$.

\subsection{Image Data and Preprocessing}
\label{sec-data-details}
For image-based experiments, we rely on the official implementation of
ALAE and the corresponding pretrained models, available at
\begin{center}
\url{https://github.com/podgorskiy/ALAE}
\end{center}
To obtain semantic attributes for the FFHQ dataset, we use the neural
network–based annotations released at
\begin{center}
\url{https://github.com/DCGM/ffhq-features-dataset}
\end{center}
For baseline comparisons, we use the official implementations of LightSB
and EgNOT, available at
\begin{center}
\url{https://github.com/ngushchin/LightSB}
\end{center}
and
\begin{center}
\url{https://github.com/PetrMokrov/Energy-guided-Entropic-OT}
\end{center}
respectively.

\section{Additional Experimental Results}
\label{sec-exp-additional}

\subsection{Wall-Clock Training Time Comparison}\label{app:wall-clock}

We report wall-clock training times for the M$\to$F experiment ($\varepsilon{=}1.0$) with $10000$ iterations, using the same hyperparameters as in Appendix~\ref{sec-vareot-details}. For VarEOT, we vary the number of particles $K \in \{32, 64, 128, 256, 512\}$ to study the speed--quality trade-off. All experiments for VarEOT and EgNOT were conducted on a single GPU, while LightSB experiments were run on CPU, as in the original paper.

\begin{table}[h]
\centering
\begin{tabular}{lcrrr}
\toprule
\textbf{Method} & \textbf{Config} & \textbf{Time} & \textbf{Iter} & \textbf{FID} $\downarrow$ \\
\midrule
VarEOT & $K{=}512$ & $409$ s & $10000$ & $\mathbf{9.575}$ \\
VarEOT & $K{=}256$ & $215$ s & $10000$ & $\mathbf{10.55}$ \\
VarEOT & $K{=}128$ & $131$ s & $10000$ & $\mathbf{11.43}$ \\
VarEOT & $K{=}64$  & $85$ s  & $10000$ & $\mathbf{12.85}$ \\
VarEOT & $K{=}32$  & $85$ s  & $10000$ & $\mathbf{13.45}$ \\
\midrule
LightSB \citep{korotin2024light}  & --- & $191$ s  & $10000$ & $22.63$ \\
EgNOT \citep{mokrov2024energyguided} & --- & $1692$ s & $10000$ & $17.58$ \\
\bottomrule
\end{tabular}
\vspace{3mm}
\caption{Wall-clock training time and FID on the M$\to$F task ($\varepsilon{=}1.0$; $10000$ iterations). VarEOT training time scales with the number of particles $K$; larger $K$ yields better FID at the cost of longer training.}
\label{tab:wall-clock}
\end{table}

\subsection{Other Unpaired Image-to-Image Translation Setups}
In the main text, we present qualitative results for the unpaired
image-to-image translation task using the ALAE encoder in the
Male$\to$Female (M$\to$F) setup. In this appendix, we provide additional
qualitative examples for the same task under alternative and commonly
used translation directions.

Specifically, we report results for the Woman$\to$Man (F$\to$M),
Adult$\to$Child (A$\to$C), and Child$\to$Adult (C$\to$A) setups. All experiments are
conducted under the same training protocol and model configuration as
in the main text, and differ only in the choice of the source and target
domains. For consistency, we fix the regularization parameter to
$\varepsilon = 1.0$ for all setups shown below.

Figures~\ref{fig:W2M}--\ref{fig:Y2O} demonstrate that the proposed method
successfully learns meaningful transport maps across different
translation directions, further confirming that the qualitative
behavior observed in the M$\to$F setup generalizes to other unpaired
image-to-image translation scenarios.

\begin{figure}[h]
    \centering
    \includegraphics[width=\linewidth]{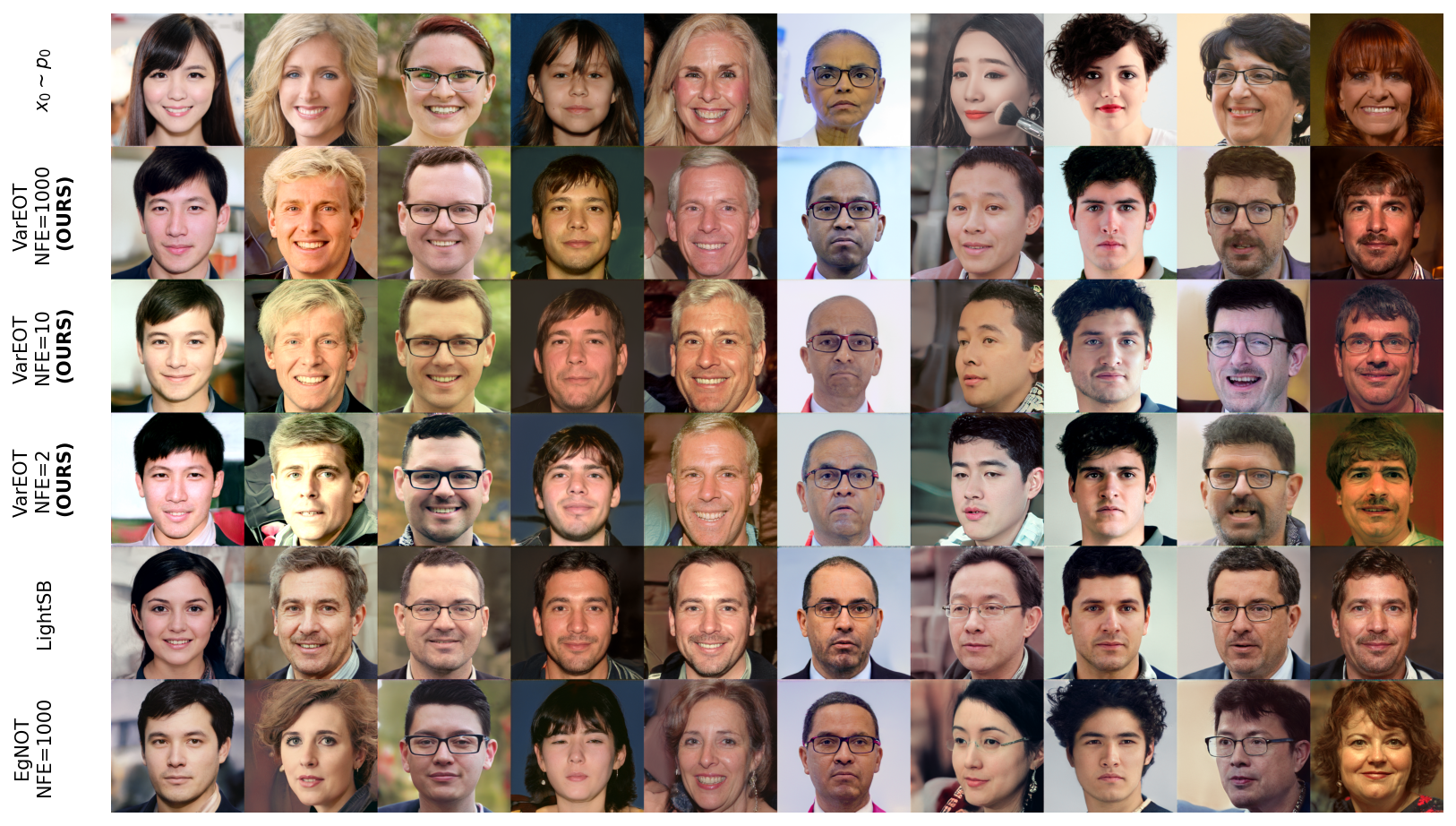}
    \caption{\centering
    Qualitative comparison for \textit{Female} $\rightarrow$ \textit{Male} translation with $\varepsilon=1.0$.
    From top to bottom: input samples, VarEOT (ours), LightSB, and EgNOT.
    Input images are selected from the test set: we take the first 300 samples and rank them by encoder-decoder reconstruction quality (LPIPS), displaying the top-ranked examples.}
    \label{fig:W2M}
\end{figure}

\begin{figure}[h]
    \centering
    \includegraphics[width=\linewidth]{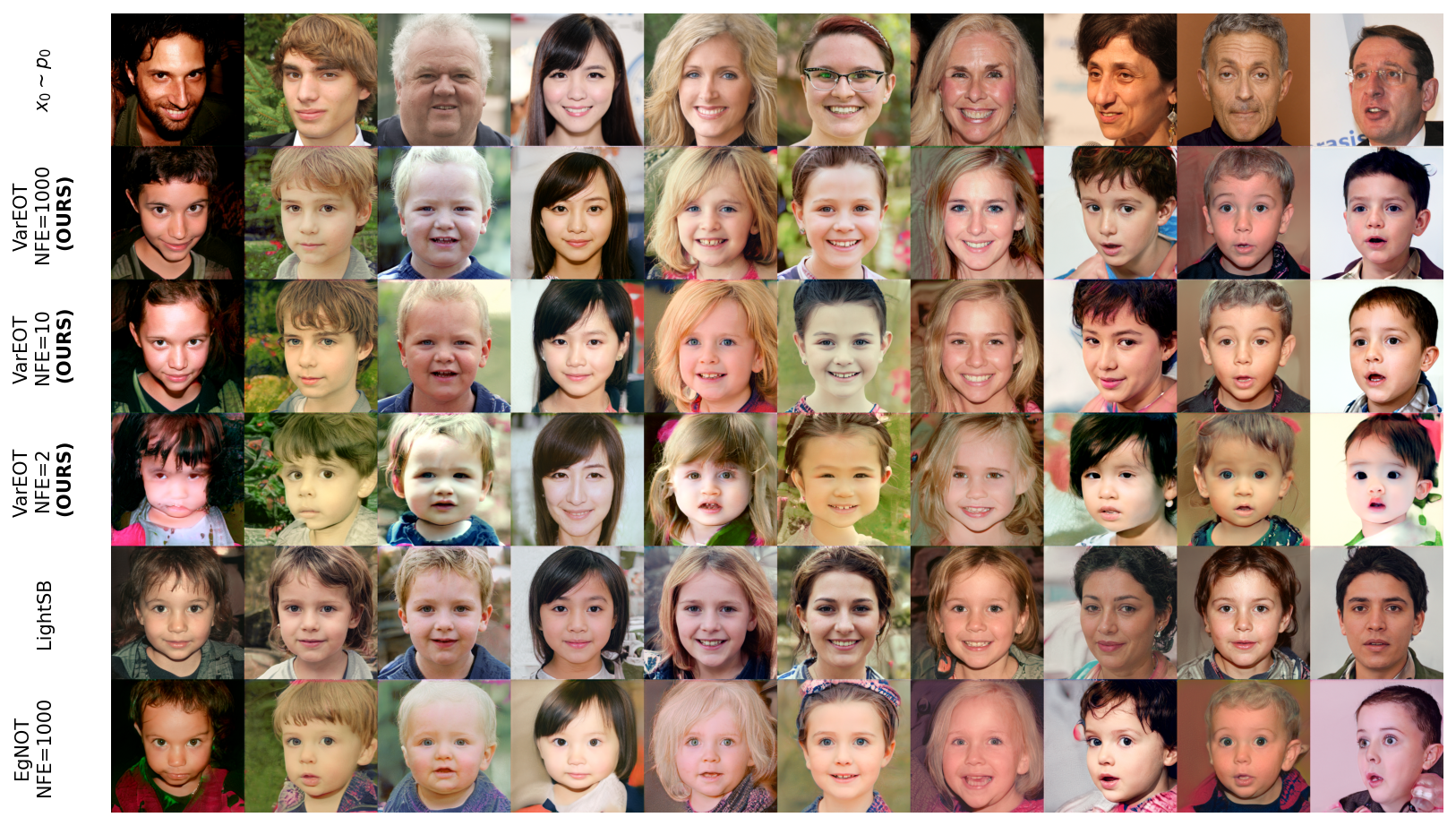}
    \caption{\centering
    Qualitative comparison for \textit{Adult} $\rightarrow$ \textit{Child} translation with $\varepsilon=1.0$.
    From top to bottom: input samples, VarEOT (ours), LightSB, and EgNOT.
    Input images are selected from the test set: we take the first 300 samples and rank them by encoder-decoder reconstruction quality (LPIPS), displaying the top-ranked examples.}
    \label{fig:O2Y}
\end{figure}

\begin{figure}[h]
    \centering
    \includegraphics[width=\linewidth]{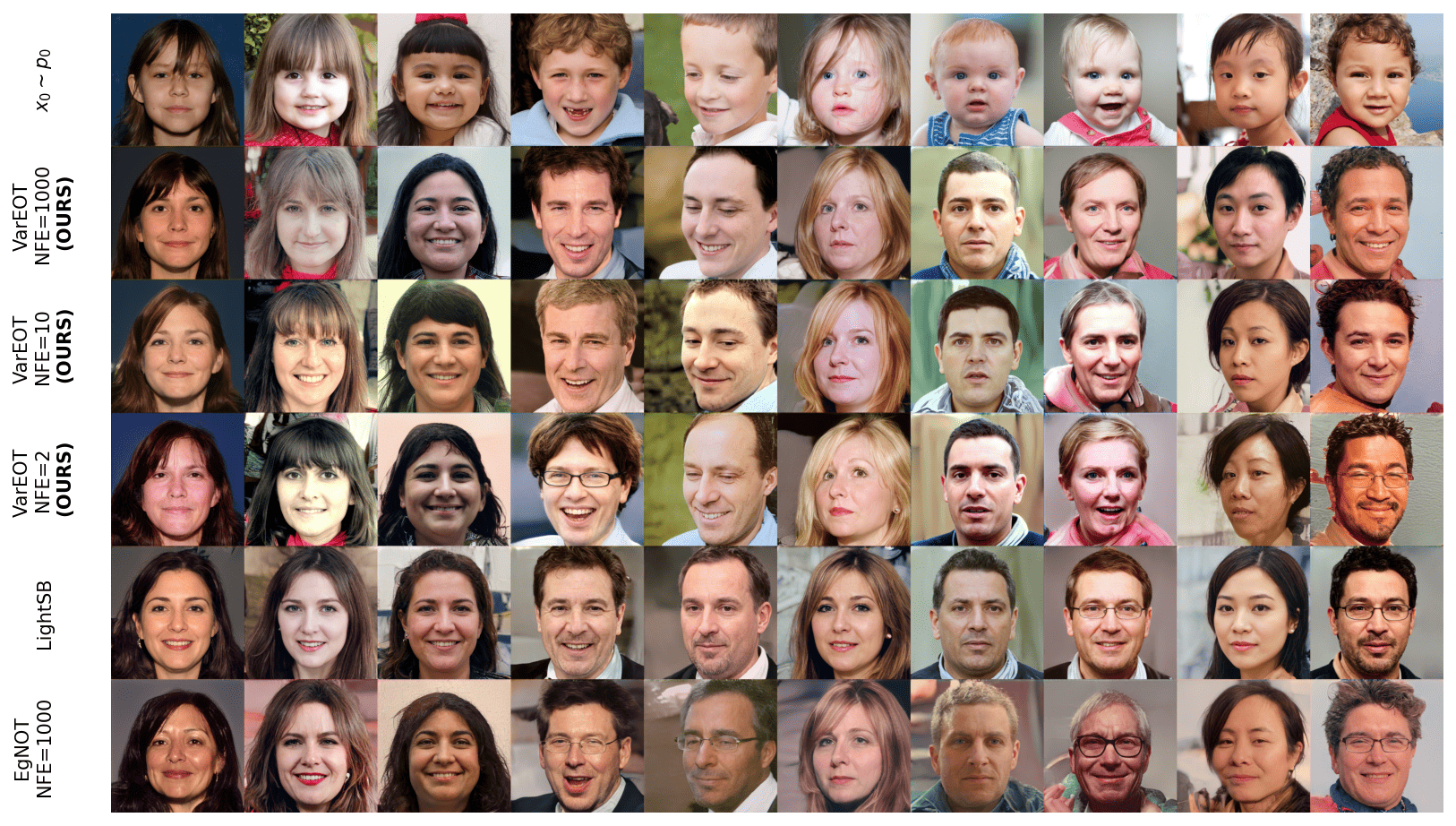}
    \caption{\centering
    Qualitative comparison for \textit{Child} $\rightarrow$ \textit{Adult} translation with $\varepsilon=1.0$.
    From top to bottom: input samples, VarEOT (ours), LightSB, and EgNOT.
    Input images are selected from the test set: we take the first 300 samples and rank them by encoder-decoder reconstruction quality (LPIPS), displaying the top-ranked examples.}
    \label{fig:Y2O}
\end{figure}

\subsection{Dependence on the parameter $\varepsilon$.}\label{app:eps study}
In Figure~\ref{fig:eps-dependence}, we show how the solution learned by VarEOT depends on the parameter $\varepsilon$ in the \textit{Male}$\rightarrow$\textit{Female} experiment. As expected, the diversity increases with the increase of $\varepsilon$.

\begin{figure*}[!t]
\vspace{-15mm}
\begin{subfigure}[b]{0.99 \linewidth}
\centering
\includegraphics[width=0.7\linewidth]{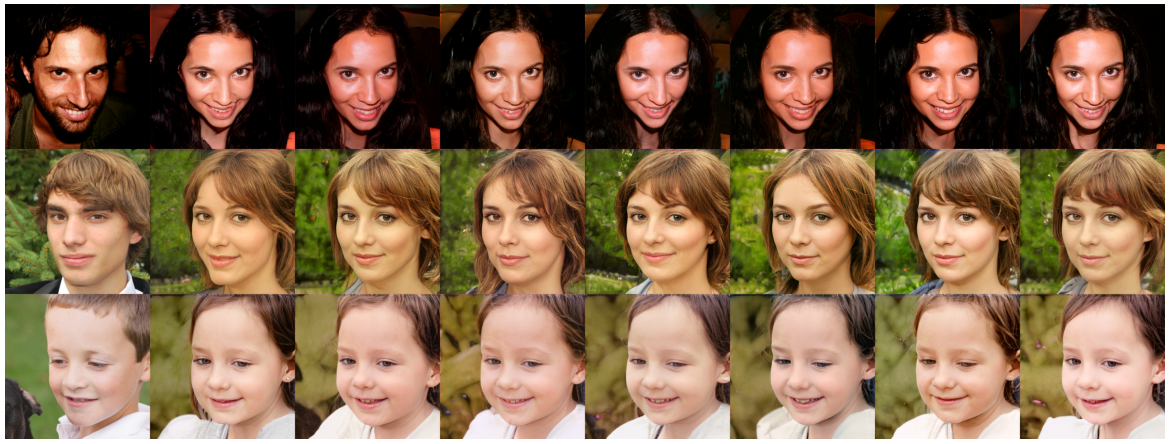}
\caption{\vspace{-1mm} \centering VarEOT \textit{Male} $\rightarrow$ \textit{Female}, $\varepsilon=0.1$. Almost no diversity.}
\vspace{-1mm} %
\end{subfigure}
\vskip\baselineskip
\begin{subfigure}[b]{0.99 \linewidth}
\centering
\includegraphics[width=0.7\linewidth]{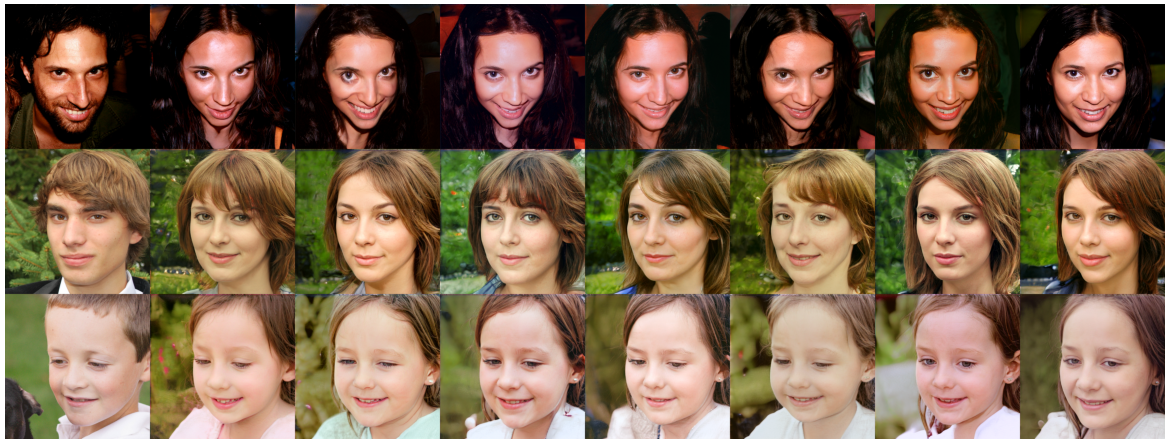}
\caption{\vspace{-1mm} \centering VarEOT \textit{Male} $\rightarrow$ \textit{Female}, $\varepsilon=0.5$. Reasonable diversity.}
\vspace{-1mm} %
\end{subfigure}
\vskip\baselineskip
\begin{subfigure}[b]{0.99 \linewidth}
\centering
\includegraphics[width=0.7\linewidth]{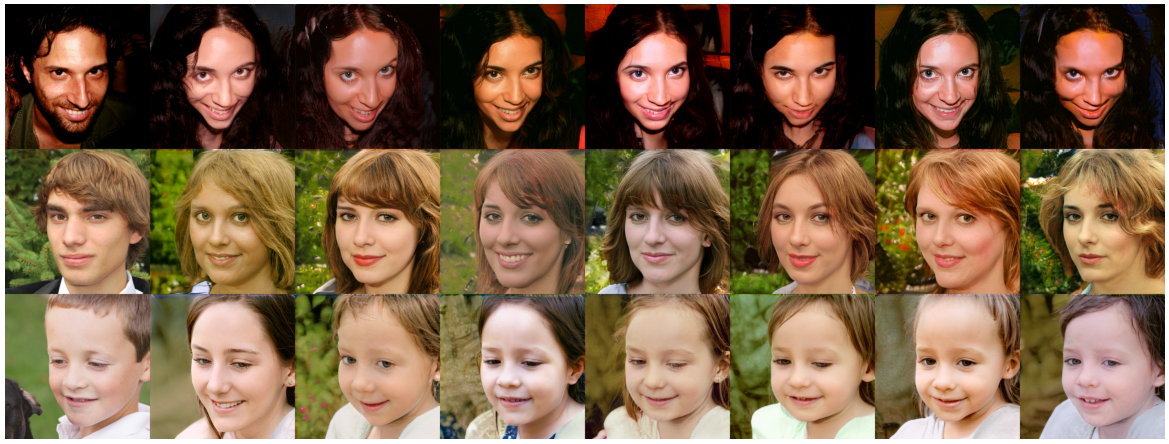}
\caption{\vspace{-1mm} \centering VarEOT \textit{Male} $\rightarrow$ \textit{Female}, $\varepsilon=1.0$. Moderate diversity.}
\vspace{-1mm} %
\end{subfigure}
\vskip\baselineskip
\begin{subfigure}[b]{0.99 \linewidth}
\centering
\includegraphics[width=0.7\linewidth]{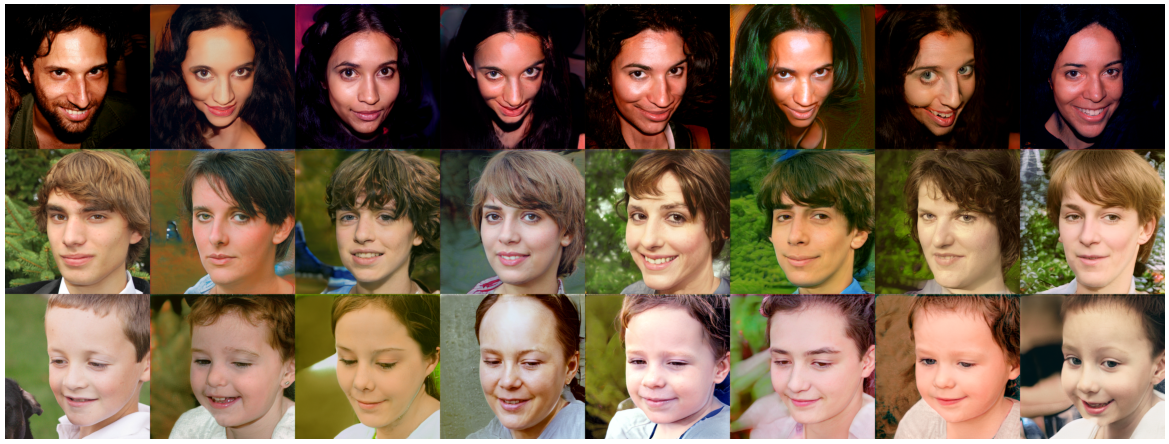}
\caption{\vspace{-1mm} \centering VarEOT \textit{Male} $\rightarrow$ \textit{Female}, $\varepsilon=10.0$. High diversity.}
\vspace{-1mm} %
\end{subfigure}
\caption{\centering VarEOT with NFE=10 in task: \textit{Man} $\rightarrow$ \textit{Woman} for different $\varepsilon$}\label{fig:eps-dependence}
\end{figure*}

\subsection{Effect of Langevin Inference Parameters}\label{app:lang study}

To further analyze the behavior of VarEOT at inference time, we study the
sensitivity of the translation quality to the Langevin sampling parameters in
the \textit{Male}$\rightarrow$\textit{Female} (M$\to$F) image-to-image translation
task. Figure~\ref{fig:fid-heatmaps} reports heatmaps of FID scores as a function of
the Langevin step size and the number of sampling steps, for different values of
the entropic regularization parameter $\varepsilon$.

\begin{figure*}[t]
    \centering
    \begin{subfigure}{0.48\textwidth}
        \centering
        \includegraphics[width=\linewidth]{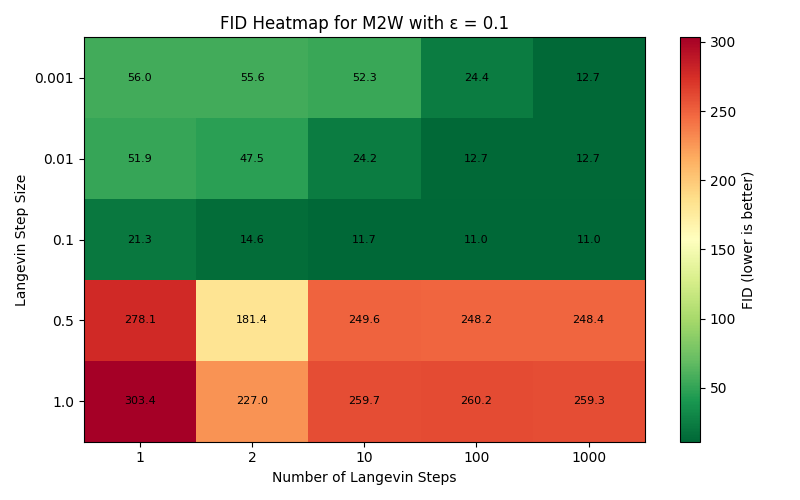}
        \caption{$\varepsilon = 0.1$}
    \end{subfigure}
    \hfill
    \begin{subfigure}{0.48\textwidth}
        \centering
        \includegraphics[width=\linewidth]{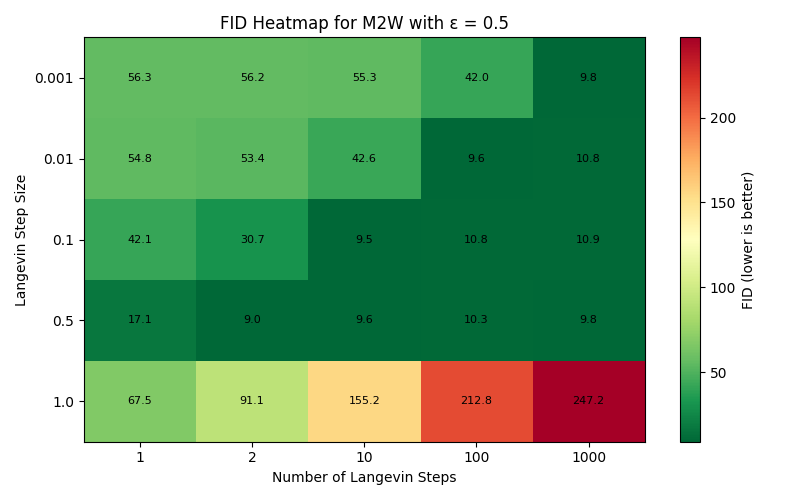}
        \caption{$\varepsilon = 0.5$}
    \end{subfigure}

    \vspace{2mm}

    \begin{subfigure}{0.48\textwidth}
        \centering
        \includegraphics[width=\linewidth]{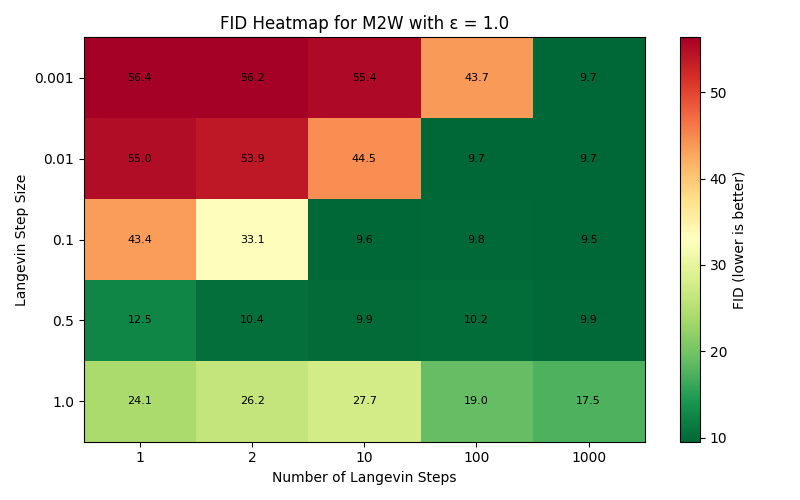}
        \caption{$\varepsilon = 1.0$}
    \end{subfigure}
    \hfill
    \begin{subfigure}{0.48\textwidth}
        \centering
        \includegraphics[width=\linewidth]{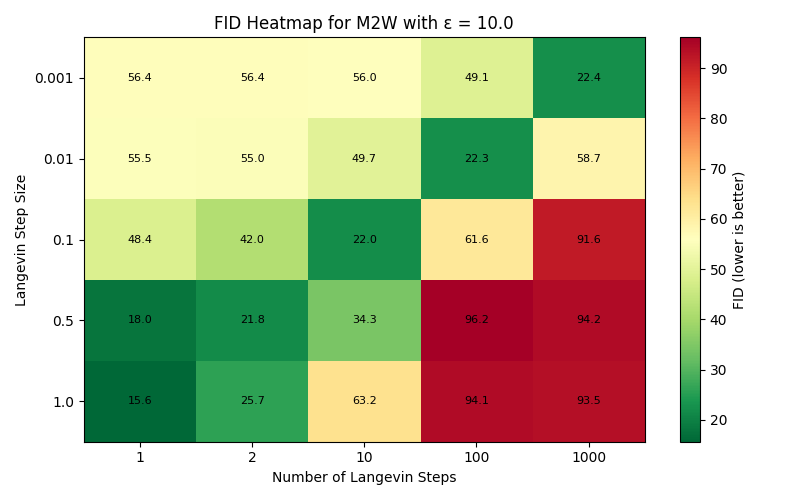}
        \caption{$\varepsilon = 10.0$}
    \end{subfigure}

    \caption{
    FID heatmaps for the \textit{Male}$\rightarrow$\textit{Female} (M$\to$F) unpaired
    image-to-image translation task in the ALAE latent space.
    Each heatmap shows the dependence of FID on the Langevin step size (rows)
    and the number of inference steps (columns) for a fixed value of the
    entropic regularization parameter $\varepsilon$.
    Lower values indicate better performance.
    }
    \label{fig:fid-heatmaps}
\end{figure*}

\end{document}